\documentclass[11pt,letterpaper]{article}
\usepackage[T1]{fontenc}
\usepackage[utf8]{inputenc}
\usepackage{lmodern}
\usepackage[margin=1in]{geometry}
\usepackage{microtype}
\usepackage[hyphens]{url}
\usepackage{graphicx}
\usepackage[round,authoryear]{natbib}
\usepackage[font=small,labelfont=bf]{caption}
\usepackage{algorithm}
\usepackage{algorithmic}
\usepackage{float}
\usepackage{placeins}
\usepackage{booktabs}
\usepackage{enumitem}
\usepackage{amsmath,amsfonts,amssymb,amsthm}
\usepackage{bm}
\usepackage{mathtools}
\usepackage[hidelinks]{hyperref}
\hypersetup{
  pdftitle={Posterior-Averaged Bellman Flow Matching for Distributional Critics},
  pdfauthor={Zeqiao Li, Yijing Wang, Haoyu Wang, Zheng Li, Zhiqiang Zuo}
}
\newtheorem{proposition}{Proposition}
\newtheorem{lemma}{Lemma}
\newtheorem{theorem}{Theorem}
\newtheorem{corollary}{Corollary}
\theoremstyle{definition}

\newtheorem{remark}{Remark}

\DeclareMathOperator{\Var}{Var}
\DeclareMathOperator{\Cov}{Cov}
\DeclareMathOperator{\ESS}{ESS}

\newcommand{\E}{\mathbb{E}}

\newcommand{\T}{\mathcal{T}}

\newcommand{\ours}{PA-BFM}

\title{Posterior-Averaged Bellman Flow Matching for Distributional Critics}
\author{
\begin{minipage}{\dimexpr\textwidth-2\tabcolsep\relax}
\centering\normalsize
Zeqiao Li\textsuperscript{\rm 1},
    Yijing Wang\textsuperscript{\rm 1},
    Haoyu Wang\textsuperscript{\rm 1,\rm 2},\\
    Zheng Li\textsuperscript{\rm 1},
    Zhiqiang Zuo\textsuperscript{\rm 1}
\\[6pt]
\small
\textsuperscript{\rm 1}Tianjin Key Laboratory of Intelligent Unmanned Swarm Technology and System,\\
    School of Electrical and Information Engineering, Tianjin University, Tianjin 300072, China\\
    \textsuperscript{\rm 2}Key Laboratory of System Control and Information Processing,\\
    Ministry of Education of China, Shanghai 200240, China\\
    \{lizeqiao, yjwang, why2014, zhengl, zqzuo\}@tju.edu.cn
\end{minipage}
}
\date{}

\begin{document}

\maketitle

\begin{abstract}
Distributional critics model the full return distribution, providing richer
signals for exploration and risk-sensitive control.
Recent work has shown that flow-based critics can model these distributions
flexibly and provide effective supervision across flow time for
temporal-difference learning.
However, unlike standard flow matching with fixed data endpoints, a flow
critic is trained on Bellman targets recursively generated by a learned target
critic under the current policy.
Existing objectives therefore either use noisy single-endpoint supervision or
rely on the learned flow itself for additional guidance. Our key observation is that an intermediate bridge state may be compatible
with several Bellman endpoints, so its velocity should average their directions
under the corresponding posterior.
We propose Posterior-Averaged Bellman Flow Matching (PA-BFM), which samples an
endpoint pool, infers the closed-form posterior over the latent anchor, and
regresses to the posterior-mean velocity.
Uniform anchor selection makes this target an exact finite-pool
Rao--Blackwellization of the single-endpoint target: for every \(K\), it
preserves the population target, cannot increase conditional variance, and
its conditional variance scales as \(O(K^{-1})\).
The endpoint pool supports conservative updates, while the twin return flows
provide critic-disagreement and unsupported-mass diagnostics. Controlled
studies show that PA-BFM reduces fixed-query target mean-squared
error by \(79.7\%\) and target-induced gradient variance by up to \(98.0\%\),
while improving return-distribution fidelity.
Across 50 OGBench tasks and four offline-to-online scenarios, PA-BFM achieves
higher aggregate performance than Value Flows.
\end{abstract}
\section{Introduction}
\label{sec:introduction}

In actor--critic reinforcement learning (RL), policy improvement depends on
the critic used to evaluate actions and construct Bellman targets. Most
methods retain only the expected return, whereas distributional RL preserves
information about stochastic outcomes, multimodality, and risk
\citep{bellemare2017distributional,dabney2018iqn,dabney2018qr,
duan2022dsac}. Recent flow-based critics represent this distribution through
a continuous transport from simple noise to return samples
\citep{dong2026valueflows,xu2026pcbf,ding2026dbc,zhong2025flowcritic,
chen2025dfc}. Flow-time supervision further provides dense signals for
changing temporal-difference targets
\citep{lipman2023flow,albergo2023stochastic,liu2023flow}.

Existing flow critics use two complementary forms of Bellman supervision.
Endpoint matching is anchored to a sampled Bellman return but uses one
continuation, whereas pathwise matching provides denser supervision through
another learned successor field.
Value Flows combines these objectives with target networks and confidence
weighting, while Path-Coupled Bellman Flows (PCBF) introduces
source-consistent geometry and a learned-field control variate
\citep{dong2026valueflows,xu2026pcbf}.
Related bridge, distillation, and architectural corrections retain the same
tension between an anchored but noisy target and guidance from a learned flow
\citep{ding2026dbc,chen2025dfc,agrawalla2025floq}.

We recast this tension as a conditional estimation problem.
At a fixed intermediate bridge state, multiple Bellman endpoints can be
compatible with the same observation.
Each compatible endpoint implies a candidate velocity, and the population
bridge velocity is the posterior average of these endpoint directions.
The uncertainty is therefore not whether a Bellman endpoint law exists, but
which compatible endpoint explains the observed intermediate state.

Our goal is to estimate this posterior velocity directly from Bellman endpoint
samples. This estimator is particularly natural in distributional RL:
conditioned on a replay transition, the target policy and return flow directly
sample the conditional Bellman endpoint law. For the one-dimensional linear
Gaussian critic bridge, candidate likelihoods are closed form, so the finite
endpoint pool induces a tractable posterior over the latent anchor without
endpoint retrieval or an auxiliary proposal distribution. In contrast,
posterior augmentation in
generative modeling often requires retrieval or approximate candidate
construction, and policy-side reverse matching may require a separate proposal
\citep{stoica2026pafm,yang2026stablevelocity,li2026rfm}.

Based on this observation, we propose
\emph{Posterior-Averaged Bellman Flow Matching} (PA-BFM).
For each transition, PA-BFM samples a finite endpoint pool, selects one
endpoint uniformly to construct the bridge state, and evaluates every
candidate using its Gaussian bridge likelihood.
The posterior-weighted direction is exactly the conditional expectation of
the single-endpoint Bellman conditional flow matching (BCFM) target.
Hence every finite pool preserves the population bridge velocity and cannot
increase conditional target variance.
We further show that its conditional variance scales as \(O(K^{-1})\) and
derive implications for gradient variability, terminal transport error, and
Bellman tracking.

The endpoint posterior supports conservative reweighting, while the twin
return flows provide disagreement and unsupported-mass diagnostics
\citep{kuznetsov2020tqc,kumar2019bear,kumar2020cql,ma2021conservative,
fujimoto2018td3}.
The main experiments use the undistorted target and retain matched flow-critic
policy extraction. Our main contributions are:
\begin{itemize}
    \item
    \textbf{Posterior-averaged Bellman supervision.}
    We formulate source-consistent endpoint matching as posterior velocity
    estimation and estimate it from \(K\) Bellman endpoints using closed-form
    bridge posteriors, without regressing to a learned successor field.

    \item
    \textbf{Exact finite-pool guarantees.}
    Uniform anchor selection makes PA-BFM an exact Rao--Blackwellization of
    BCFM. We prove finite-\(K\) unbiasedness and variance dominance,
    characterize \(K^{-1}\) conditional-variance scaling with an ESS-based bound,
    and propagate velocity error to terminal Wasserstein and Bellman tracking
    errors.

    \item
    \textbf{Extensions and empirical validation.}
    The endpoint pool supports conservative posterior tilting, while the twin
    return flows provide critic-disagreement and unsupported-mass diagnostics.
    Controlled studies, 50 OGBench tasks, and four offline-to-online scenarios
    show gains in estimation, return-law fidelity, and downstream performance.
\end{itemize}
\section{Preliminaries}
\label{sec:preliminaries}

We review distributional returns, one-dimensional flow matching, and the
source-consistent Bellman bridge used by our critic.

\subsection{MDPs and Distributional Returns}
\label{sec:prelim-mdp}

We consider a discounted Markov decision process
\(\mathcal M=(\mathcal S,\mathcal A,P,\mathcal R,\rho_0,\gamma)\), where
\(\mathcal S\) and \(\mathcal A\) are the state and action spaces,
\(P\) is the transition kernel, \(\mathcal R\) is the reward function,
\(\rho_0\) is the initial-state distribution, and \(\gamma\in(0,1)\).
In offline reinforcement learning (RL), the learner observes a fixed dataset
\(\mathcal D=\{(s,a,r,s',d)\}\), where \(d\in\{0,1\}\) indicates
termination and \(r\) is the realized reward. We use \(h\) for environment
time and \(t\in[0,1]\) for flow time.

For a policy \(\pi\), the return random variable is
\begin{equation}
    Z^\pi(s,a)
    =
    \sum_{h=0}^{\infty}\gamma^h \mathcal R(S_h,A_h),
    \qquad
    S_0=s,\quad A_0=a,
    \label{eq:return-rv}
\end{equation}
where \(S_{h+1}\sim P(\cdot\mid S_h,A_h)\) and
\(A_h\sim\pi(\cdot\mid S_h)\) for \(h\ge1\).
Its expectation is the action value
\(Q^\pi(s,a)=\E[Z^\pi(s,a)]\), whereas distributional RL models the law of
\(Z^\pi(s,a)\) itself
\citep{bellemare2017distributional,dabney2018iqn,dabney2018qr}.

For a state--action conditional return variable \(Z\), the distributional
Bellman operator is
\begin{equation}
\begin{aligned}
    (\T^\pi Z)(s,a)
    &\stackrel{d}{=}
    r+\gamma(1-d)Z(S',A'),\\
    S'&\sim P(\cdot\mid s,a),
    \qquad
    A'\sim\pi(\cdot\mid S').
\end{aligned}
    \label{eq:dist-bellman-rv}
\end{equation}
\par\indent For a nonterminal transition with reward \(r\), its density satisfies
\begin{equation}
    p_{\T^\pi Z}(z\mid s,a)
    =
    \frac{1}{\gamma}
    \E_{S',A'}
    \left[
        p_Z\!\left(
            \frac{z-r}{\gamma}
            \middle|S',A'
        \right)
    \right].
    \label{eq:pdf-bellman}
\end{equation}
\par\indent Terminal transitions place all mass at \(z=r\).
The operator is a \(\gamma\)-contraction under Wasserstein metrics and has
\(Z^\pi\) as its unique fixed point
\citep{bellemare2017distributional}.

\subsection{Flow Matching}
\label{sec:prelim-fm}

Flow matching (FM) learns a time-dependent vector field that transports a
standard Gaussian source to a target distribution \(p_1(\cdot\mid c)\).
Let \(\epsilon\sim\mathcal N(0,1)\), and let \(p_0\) denote its density. The
flow solves
\begin{equation}
    \frac{d}{dt}\phi_\theta(\epsilon\mid t,c)
    =
    v_\theta\!\left(
        \phi_\theta(\epsilon\mid t,c),t;c
    \right),
    \qquad
    \phi_\theta(\epsilon\mid0,c)=\epsilon.
    \label{eq:return-flow-ode}
\end{equation}
\par\indent Conditional flow matching (CFM) trains this field using tractable paths
\citep{lipman2023flow,albergo2023stochastic,liu2023flow}.
For independent \(\epsilon\sim\mathcal N(0,1)\),
\(y\sim p_1(\cdot\mid c)\), and
\(t\sim\operatorname{Unif}([0,1))\), define
\begin{equation}
    X_t=(1-t)\epsilon+ty,
    \qquad
    u_t(X_t,y)=y-\epsilon=\frac{y-X_t}{1-t}.
    \label{eq:linear-cfm-path}
\end{equation}
\par\indent The CFM objective is
\begin{equation}
    \mathcal L_{\mathrm{CFM}}(\theta)
    =
    \E\!\left[
        \left\|
        v_\theta(X_t,t;c)-u_t(X_t,y)
        \right\|_2^2
    \right].
    \label{eq:cfm}
\end{equation}
\par\indent Its population minimizer is
\(v^\star(x,t;c)=\E[u_t(x,Y)\mid X_t=x,c]\): the conditional average of
all path directions compatible with the observed intermediate point.
For our critic, \(c=(s,a)\), and
\(Z_\theta(s,a)\stackrel{d}{=}\phi_\theta(\epsilon\mid1,s,a)\).

\subsection{Bellman Endpoint Laws and Source-Consistent Bridges}
\label{sec:prelim-bellman-flow}

Fix \((s,a)\) and let \(\tau=(r,s',d)\) denote one transition outcome.
Let \(Y\sim\mu_\tau\) denote the abstract conditional Bellman endpoint law.
A sample from this law is generated by a target-flow return
\(Z_{\bar\theta}(s',a';\epsilon')\):
\begin{equation}
\begin{aligned}
    Y
    &=r+\gamma(1-d)
      Z_{\bar\theta}(s',A';\epsilon'),\\
    \mu_\tau&=\operatorname{Law}(Y\mid\tau),
    \qquad
    A'\sim\pi(\cdot\mid s'),\quad
    \epsilon'\sim\mathcal N(0,1).
\end{aligned}
    \label{eq:conditional-bellman-endpoint-law}
\end{equation}
\par\indent In our implementation, \(\mu_\tau\) is the law of the shared-noise mean of
the two target critics; the explicit construction is given in
Appendix~\ref{app:critic-update}. An endpoint pool averages the
successor-action and target-flow randomness
within one transition; replay sampling performs the outer average over
\(\tau\mid s,a\).

The Bellman endpoint law does not uniquely determine an intermediate path.
For a nonterminal transition (\(d=0\)), applying the Bellman affine map to
each slice of a successor flow gives
\(X_t^{\mathrm{sl}}=r+\gamma X_t'\). If \(X_t'\) has density
\(p_t'(\cdot\mid s',A')\) and velocity \(v_t'\), then, with
\(\xi=(x-r)/\gamma\),
\begin{align}
    q_t^{\mathrm{sl}}(x)
    &=\frac1\gamma
      \E_{A'}[p_t'(\xi\mid s',A')],
    \label{eq:bellman-density-path}\\
    v_t^{\mathrm{sl}}(x)
    &=\gamma
      \frac{\E_{A'}[p_t'(\xi\mid s',A')v_t'(\xi\mid s',A')]}
           {\E_{A'}[p_t'(\xi\mid s',A')]}.
    \label{eq:vf-bellman-velocity}
\end{align}
\par\indent The factor \(\gamma\) follows from
\(\dot X_t^{\mathrm{sl}}=\gamma\dot X_t'\).
Although this path reaches the correct Bellman endpoint, its source is
\(\mathcal N(r,\gamma^2)\), rather than the common critic source
\(\mathcal N(0,1)\).
For \(d=1\), the endpoint law is the point mass at \(Y=r\).

We therefore use the source-consistent bridge
\begin{equation}
\begin{aligned}
    X_t&=(1-t)\epsilon+tY,
    \qquad \epsilon\perp Y,\\
    \epsilon&\sim\mathcal N(0,1),
    \qquad Y\sim\mu_\tau.
\end{aligned}
    \label{eq:source-consistent-bellman-bridge}
\end{equation}
\par\indent If the successor uses the linear path
\(X_t'=(1-t)\epsilon+tX_1'\) with the same source noise, then
\begin{equation}
    X_t
    =
    X_t^{\mathrm{sl}}
    +(1-t)\big[(1-\gamma)\epsilon-r\big].
    \label{eq:path-residual-correction}
\end{equation}
\par\indent Thus the two paths share the endpoint at \(t=1\), but not their intermediate
marginals.

Using the standard Gaussian density \(p_0\), define
\begin{equation}
    \ell_t(x\mid y)
    =
    \frac{1}{1-t}
    p_0\!\left(\frac{x-ty}{1-t}\right),
    \qquad
    u_t(x,y)=\frac{y-x}{1-t}.
    \label{eq:bridge-likelihood-velocity}
\end{equation}
\par\indent The transition-conditional bridge velocity is
\begin{equation}
    v_\tau^\star(x,t)
    =
    \E\!\left[
        u_t(x,Y)
        \middle|
        X_t=x,\tau
    \right].
    \label{eq:transition-bridge-velocity}
\end{equation}
\par\indent The full state--action velocity is recovered by the outer transition mixture,
\begin{equation}
    v^\star(x,t\mid s,a)
    =
    \E\!\left[
        v_\tau^\star(x,t)
        \middle|
        X_t=x,s,a
    \right].
    \label{eq:transition-posterior-decomposition}
\end{equation}
\par\indent Hence the endpoint pool estimates the velocity conditional on one transition,
while replay sampling accounts for environment-transition randomness.
\section{Method}
\label{sec:method}

We use PA-BFM for the complete algorithm and PA-BCFM for its
posterior-averaged critic-training objective.

\subsection{Bellman Objectives and Posterior Velocity}
\label{sec:from-bellman-flow-to-posterior}

Bellman target noise is recursively reused in later temporal-difference
updates. Unlike standard FM with fixed data endpoints, a flow critic draws
from a learned Bellman law under the current policy, so error can re-enter
later targets. We seek an endpoint-anchored estimator that averages
continuation uncertainty already present in this law.
For a sampled transition \(\tau=(r,s',d)\), draw a Bellman endpoint
\begin{equation}
\begin{aligned}
    Y
    &=r+\gamma(1-d)Z_{\bar\theta}(s',A';\epsilon'),\\
    A'\sim\pi(\cdot\mid s'),
    &\qquad \epsilon'\sim\mathcal N(0,1),
\end{aligned}
    \label{eq:bellman-endpoint}
\end{equation}
and construct the source-consistent bridge
\(X_t=(1-t)\epsilon+tY\), whose supervision direction is
\(u_t(X_t,Y)=(Y-X_t)/(1-t)=Y-\epsilon\).
Bellman conditional flow matching (BCFM) minimizes
\begin{equation}
    \mathcal L_{\mathrm{BCFM}}(\theta)
    =
    \E\!\left[
        \left\|
        v_\theta(X_t,t;s,a)-u_t(X_t,Y)
        \right\|_2^2
    \right].
    \label{eq:bcfm}
\end{equation}
\par\indent Conditioned on \(X_t=x\), its population target is
\begin{equation}
    v_\tau^\star(x,t)
    =
    \E\!\left[
        u_t(x,Y)
        \middle|
        X_t=x,\tau
    \right].
    \label{eq:posterior-bellman-velocity}
\end{equation}
\par\indent BCFM is conditionally unbiased, but \(X_t=x\) does not identify which
compatible endpoint was sampled. Using one continuation therefore introduces
avoidable label variance.
This is a latent-identity problem: after observing the bridge point, the
missing quantity is which compatible continuation produced it.

Value Flows additionally uses distributional conditional flow matching
(DCFM) to impose pathwise consistency
\citep{dong2026valueflows}. With
\(X_t'=\phi_{\bar\theta}(\epsilon'\mid t,s',A')\), its practical objective is
\begin{equation}
\begin{aligned}
    \mathcal L_{\mathrm{DCFM}}^{\mathrm{VF}}(\theta)
    &=\E\!\left[\|\Delta_t^{\mathrm{VF}}\|_2^2\right],\\
    \Delta_t^{\mathrm{VF}}
    &=v_\theta(r+\gamma X_t',t;s,a)
      -v_{\bar\theta}(X_t',t;s',A').
\end{aligned}
    \label{eq:vf-dcfm}
\end{equation}
\par\indent DCFM provides dense learned-field supervision, but it is not an estimator of
the source-consistent posterior target in
Eq.~\eqref{eq:posterior-bellman-velocity}.
The corresponding slice-wise affine geometry would instead require
\begin{equation}
    v_\theta(r+\gamma x',t;s,a)
    \approx
    \gamma v_{\bar\theta}(x',t;s',A'),
    \label{eq:scaled-dcfm}
\end{equation}
and follows the source-mismatched path in
Eq.~\eqref{eq:vf-bellman-velocity}.
Because both sides of Eq.~\eqref{eq:vf-dcfm} are learned, degenerate shared
fields may also satisfy its residual.
DCFM is therefore dense but self-referential, whereas BCFM is anchored but
single-sample. Bridge corrections and critic-specific architectures can
strengthen this supervision, but preserve the distinction between sampled
endpoint anchors and guidance from a learned field
\citep{ding2026dbc,chen2025dfc,agrawalla2025floq}.
PCBF instead corrects the single-endpoint target with a learned-field control
variate and a tunable coefficient \citep{xu2026pcbf}.
Thus BCFM samples one continuation, PCBF corrects it with a learned field, and
PA-BFM conditions the same target on a finite endpoint pool.
This comparison is estimator-level: PA-BFM changes how the Bellman direction
is formed while leaving the target policy, return-flow parameterization, and
policy-extraction protocol unchanged.
\subsection{Posterior-Averaged Bellman Flow Matching}
\label{sec:pa-bcfm}

PA-BCFM implements the conditional averaging suggested above in four steps:
it samples a Bellman endpoint pool, selects one latent anchor to form the
bridge point, infers the anchor posterior, and averages the corresponding BCFM
directions.
Unlike generic posterior augmentation, no retrieval model is needed: the
target return flow already supplies candidates from the relevant
transition-conditional law, and the linear Gaussian bridge supplies their
likelihoods analytically. The pool exposes uncertainty about compatible
continuations without adding another learned correction field.

\par\indent For one sampled transition, PA-BCFM draws
\begin{equation}
    \mathbf Y=(Y_1,\ldots,Y_K),
    \qquad
    Y_j\overset{\mathrm{i.i.d.}}{\sim}\mu_\tau,
    \quad j=1,\ldots,K.
    \label{eq:pa-endpoint-pool}
\end{equation}
\par\indent Each endpoint is generated by the target policy and target return flow as in
Eq.~\eqref{eq:bellman-endpoint}. In practice, \(\mu_\tau\) is instantiated by
the shared-noise mean of the twin target critics; this does not change the
estimator below. All candidates share the sampled transition but retain
independent successor-action and return-noise draws, so increasing \(K\)
targets continuation uncertainty rather than replay-transition uncertainty.

\par\indent It then chooses one candidate uniformly to construct the observed bridge
point,
\begin{equation}
\begin{aligned}
    I&\sim\operatorname{Unif}\{1,\ldots,K\},
       \qquad \epsilon\sim\mathcal N(0,1),\\
    x_t&=(1-t)\epsilon+tY_I.
\end{aligned}
    \label{eq:pa-anchor-bridge}
\end{equation}
\par\indent The index \(I\) is the unobserved identity of the endpoint that generated
\(x_t\). Sampling the anchor from the same pool gives a finite candidate model
in which the posterior responsibility of every endpoint can be evaluated
exactly.

\par\indent Given the bridge point, a candidate \(Y_j\) explains \(x_t\) through the source value
\((x_t-tY_j)/(1-t)\). Bayes' rule therefore gives
\begin{equation}
\begin{aligned}
    w_j(x_t,t)
    &=
    \frac{\ell_t(x_t\mid Y_j)}
         {\sum_{\ell=1}^{K}\ell_t(x_t\mid Y_\ell)}\\
    &=
    \Pr\!\left(
        I=j
        \middle|
        X_t=x_t,\mathbf Y,\tau,t
    \right).
\end{aligned}
    \label{eq:pa-posterior-weight}
\end{equation}
\par\indent Thus \(w_j\) is the exact posterior responsibility of candidate \(Y_j\), not
a heuristic similarity score. Uniform averaging ignores this compatibility by
assigning every candidate the same responsibility. Early in the bridge, many
endpoints can explain \(x_t\) and the weights are broad; near the endpoint,
compatibility becomes sharper. The endpoint-pool effective sample size below
measures this time-dependent concentration.

\par\indent Finally, define the candidate directions and posterior target by
\begin{equation}
\begin{aligned}
    u_j&=u_t(x_t,Y_j)=\frac{Y_j-x_t}{1-t},\\
    \widehat u_K(x_t,t)&=\sum_{j=1}^{K}w_j(x_t,t)u_j.
\end{aligned}
    \label{eq:pa-posterior-velocity}
\end{equation}
\par\indent PA-BCFM trains the twin online critics toward this shared target:
\begin{equation}
\begin{aligned}
    \mathcal L_{\mathrm{PA\text{-}BCFM}}
    &=\frac12\sum_{c=1}^{2}\E\!\Bigl[\\[-2pt]
    &\qquad
        \Bigl\|
        v_{\theta_c}(x_t,t;s,a)
        -\operatorname{sg}[\widehat u_K(x_t,t)]
        \Bigr\|_2^2
    \Bigr],
\end{aligned}
    \label{eq:pa-bcfm-loss}
\end{equation}
where \(\operatorname{sg}\) denotes stop-gradient.
Replay sampling supplies the outer transition mixture in
Eq.~\eqref{eq:transition-posterior-decomposition}.
Thus PA-BCFM modifies only critic supervision. The policy proposal, action
evaluation, and target-network update remain those of the matched flow-critic
implementation.

The key point is that posterior averaging changes only the target noise, not
the target mean. The following result holds for the practical finite pool.

\begin{theorem}[Finite-pool Rao--Blackwellization]
\label{thm:pa-bcfm-estimator}
Assume that the bridge normalizer is positive and the endpoint direction has
a finite conditional second moment.
When \(K=1\), \(\widehat u_1=u_I\), so PA-BCFM exactly recovers BCFM.
For every \(K\ge1\),
\begin{equation}
\begin{aligned}
    \widehat u_K
    &=
    \E\!\left[
        u_I
        \middle|
        X_t=x_t,\mathbf Y,\tau,t
    \right],\\
    \E[\widehat u_K\mid X_t=x_t,\tau,t]
    &=v_\tau^\star(x_t,t).
\end{aligned}
    \label{eq:pa-rb-unbiasedness}
\end{equation}
\par\indent Moreover,
\begin{equation}
    \Var(\widehat u_K\mid X_t,\tau,t)
    \le
    \Var(u_I\mid X_t,\tau,t).
    \label{eq:pa-rb-variance-order}
\end{equation}
\par\indent Thus every finite pool preserves the population target and cannot increase
conditional target variance.
\end{theorem}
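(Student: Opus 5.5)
The argument treats the PA-BCFM construction as a finite latent-variable model and applies the tower property and the conditional variance decomposition.

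\textbf{The generative model.} Conditional on \((\tau,t)\), the joint law of \((\mathbf Y,I,X_t)\) is as follows. The pool entries \(Y_j\) are i.i.d.\ \(\mu_\tau\). The index \(I\) is uniform and independent of \(\mathbf Y\). The bridge point is \(X_t=(1-t)\epsilon+tY_I\) with \(\epsilon\sim\mathcal N(0,1)\) independent of everything else. Hence, given \((\mathbf Y,I=j)\), the density of \(X_t\) is \(\ell_t(\cdot\mid Y_j)\) from Eq.~\eqref{eq:bridge-likelihood-velocity}.

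\textbf{Step 1: posterior weights and the case \(K=1\).} With a uniform prior on \(I\), Bayes' rule gives \(\Pr(I=j\mid X_t=x_t,\mathbf Y,\tau,t)=w_j\). The positive-normalizer assumption guarantees this is well defined. This confirms Eq.~\eqref{eq:pa-posterior-weight}. For \(K=1\) the only weight is \(w_1=1\), so \(\widehat u_1=u_1=u_I\), which is exactly the BCFM label.

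\textbf{Step 2: \(\widehat u_K\) as a conditional expectation.} Given \((X_t=x_t,\mathbf Y)\), the direction \(u_I=u_t(x_t,Y_I)\) is a deterministic function of \(I\). Therefore
\[
\E[u_I\mid x_t,\mathbf Y]=\sum_{j=1}^{K} w_j\,u_t(x_t,Y_j)=\widehat u_K .
\]
This proves the first identity in Eq.~\eqref{eq:pa-rb-unbiasedness}.

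\textbf{Step 3: unbiasedness.} By the tower property over \(\sigma(X_t,\tau,t)\subset\sigma(X_t,\mathbf Y,\tau,t)\),
\[
\E[\widehat u_K\mid X_t,\tau,t]=\E[u_I\mid X_t,\tau,t].
\]
It remains to show that \(\E[u_t(X_t,Y_I)\mid X_t,\tau,t]=v_\tau^\star(X_t,t)\). The pair \((Y_I,\epsilon)\) has the same law as \((Y,\epsilon)\) in the single-endpoint bridge. This holds because \(I\) is independent of the i.i.d.\ pool, so \(Y_I\sim\mu_\tau\), and \(\epsilon\) is independent of \(Y_I\).

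Consequently \((X_t,Y_I)\) has the same joint law as \((X_t,Y)\) in Eq.~\eqref{eq:source-consistent-bellman-bridge}. A conditional expectation of \(u_t(X_t,Y_I)\) given \(X_t\) depends only on this joint law. It therefore equals \(v_\tau^\star\) as defined in Eq.~\eqref{eq:posterior-bellman-velocity}.

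I expect this to be the main point needing care. The extra pool members \(Y_{-I}\) are conditioned out at this stage, and I will state explicitly that we condition only on \(X_t\) and not on \(\mathbf Y\). The finite second-moment assumption makes all these conditional expectations well defined. Equality then holds almost everywhere in \(x_t\), which I will interpret as a version of the conditional expectation.

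\textbf{Step 4: variance dominance.} Apply the conditional law of total variance with inner \(\sigma\)-field \(\sigma(X_t,\mathbf Y,\tau,t)\):
\[
\Var(u_I\mid X_t,\tau,t)=\Var\!\big(\E[u_I\mid X_t,\mathbf Y,\tau,t]\mid X_t,\tau,t\big)+\E\!\big[\Var(u_I\mid X_t,\mathbf Y,\tau,t)\mid X_t,\tau,t\big].
\]
The first term is \(\Var(\widehat u_K\mid X_t,\tau,t)\) by Step 2. The second term is nonnegative, which gives Eq.~\eqref{eq:pa-rb-variance-order}. Equivalently, this is conditional Jensen applied to \((u_I-v_\tau^\star)^2\), using Step 3 to identify the common mean.
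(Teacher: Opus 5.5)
Your proposal is correct and follows essentially the same route as the paper: Bayes' rule for the anchor posterior, then the tower property combined with the single-endpoint population target (the paper's Lemma~\ref{lem:bcfm-population-target}), then the conditional law of total variance with inner $\sigma$-field generated by $(X_t,\mathbf Y,\tau,t)$, whose second term is the paper's weighted dispersion $\E[\sum_j w_j(u_j-\widehat u_K)^2\mid X_t,\tau,t]$. Your Step~3 makes explicit a point the paper leaves implicit when it invokes that lemma, namely that $Y_I\sim\mu_\tau$ and $Y_I$ is independent of $\epsilon$, because $I$ is independent of the i.i.d.\ pool.
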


\par\indent Appendix~\ref{app:anchor-reuse} gives the corresponding weighted
total-variance equality. PA-BCFM removes posterior dispersion among compatible
endpoint directions while leaving the conditional mean unchanged.
The pool supplies additional information by conditioning away the sampled
continuation identity, not by changing the Bellman endpoint law.
Equivalently, for any prediction \(v\),
\begin{equation}
    \E\!\left[(v-\widehat u_K)^2\mid X_t,\tau,t\right]
    =
    (v-v_\tau^\star)^2
    +\Var(\widehat u_K\mid X_t,\tau,t),
    \label{eq:pa-population-field-risk}
\end{equation}
so every finite pool has the same population regression minimizer.

Under the moment conditions in Appendix~\ref{app:asymptotic-ess}, the
remaining large-pool variance obeys
\begin{equation}
    \Var(\widehat u_K\mid x,t,\tau)
    =
    \frac{\sigma_\tau^2(x,t)}{K}+o(K^{-1}),
    \label{eq:pa-snis-covariance}
\end{equation}
where
\begin{equation}
    \sigma_\tau^2(x,t)
    =
    \frac{
        \E_{Y\sim\mu_\tau}\!\left[
            \ell_t(x\mid Y)^2
            \bigl(u_t(x,Y)-v_\tau^\star(x,t)\bigr)^2
        \right]
    }{
        \left(
            \E_{Y\sim\mu_\tau}[\ell_t(x\mid Y)]
        \right)^2
    }.
    \label{eq:pa-rb-asymptotic-covariance}
\end{equation}
\par\indent The effective sample size
\begin{equation}
    \ESS(\mathbf w)
    =
    \frac{1}{\sum_{j=1}^{K}w_j^2}
    \label{eq:pa-ess-main}
\end{equation}
measures posterior concentration. Appendix~\ref{app:asymptotic-ess} shows that
the normalized ESS converges to a transition-dependent concentration factor
and controls the constant in the asymptotic variance bound. Larger pools help
most when several candidates retain nonnegligible posterior mass; a larger
nominal \(K\) cannot compensate for a fully concentrated posterior.

For the squared loss, if \(g_1\) is the BCFM gradient and \(g_K\) the
PA-BCFM gradient at the same bridge point, then
\begin{equation}
\begin{aligned}
    g_K
    &=\E[g_1\mid X_t=x_t,\mathbf Y,\tau,t],\\
    \Cov(g_K\mid X_t,\tau,t)
    &\preceq\Cov(g_1\mid X_t,\tau,t).
\end{aligned}
    \label{eq:pa-gradient-rb}
\end{equation}
\par\indent Thus the same Rao--Blackwell ordering applies directly to target-induced
gradient variability.

\begin{corollary}[Training and transport error]
\label{cor:pa-training-transport}
Under the smoothness, Polyak--\L{}ojasiewicz, and flow-Lipschitz assumptions
in Appendix~\ref{app:flow-stability}, the pool-induced contribution to the SGD
error floor is \(O((B_{\mathrm{mb}}K)^{-1})\), and its contribution to terminal
\(W_1\) error is \(O((B_{\mathrm{mb}}K)^{-1/2})\). For a fixed policy,
Bellman contraction amplifies the one-step error by at most
\(1/(1-\gamma)\).
\end{corollary}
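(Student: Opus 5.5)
The plan is to chain three standard arguments, feeding each into the next. First, a variance bound on the stochastic gradient, taken from Theorem~\ref{thm:pa-bcfm-estimator} and Eq.~\eqref{eq:pa-snis-covariance}, yields an SGD error floor. Second, that floor becomes a bound on the $L^2$ velocity error, which Grönwall's inequality transports to terminal $W_1$. Third, Bellman contraction accumulates the per-step error.

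\textbf{Step 1 (SGD floor).} Write the minibatch PA-BCFM gradient as $G=\nabla\mathcal L(\theta)+\zeta_{\mathrm{rep}}+\zeta_{\mathrm{pool}}$. Here $\zeta_{\mathrm{rep}}$ collects replay, flow-time, bridge-point and $\epsilon$ randomness, and $\zeta_{\mathrm{pool}}$ is the deviation of $\widehat u_K$ from $v^\star_\tau$ at the sampled bridge point.
\begin{itemize}
\item By Eq.~\eqref{eq:pa-rb-unbiasedness}, $\zeta_{\mathrm{pool}}$ has conditional mean zero given $(X_t,\tau,t)$. It is therefore orthogonal to $\zeta_{\mathrm{rep}}$.
\item For the squared loss, $\zeta_{\mathrm{pool}}=-2\nabla_\theta v_\theta\,(\widehat u_K-v^\star_\tau)$.
\item Assume $\|\nabla_\theta v_\theta\|\le G_v$ and that $\sigma^2_\tau$ is integrable; this is the point where the moment conditions of Appendix~\ref{app:asymptotic-ess} enter. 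Then Eq.~\eqref{eq:pa-snis-covariance} gives $\E\|\zeta_{\mathrm{pool}}\|^2\le 4G_v^2\bar\sigma^2/K+o(K^{-1})$.
\item Minibatch averaging over $B_{\mathrm{mb}}$ independent draws divides this by $B_{\mathrm{mb}}$.
\end{itemize}
Then I would apply the standard $\beta$-smooth, $\mu$-PL SGD recursion with constant step $\eta\le 1/\beta$:
\[
\E[\mathcal L(\theta_{n+1})-\mathcal L^\star]\le(1-\eta\mu)\,\E[\mathcal L(\theta_n)-\mathcal L^\star]+\tfrac{\beta\eta^2}{2}\,\E\|\zeta\|^2 .
\]
This yields the floor $\tfrac{\beta\eta}{2\mu}\E\|\zeta\|^2$, and the pool part of this floor is $O((B_{\mathrm{mb}}K)^{-1})$.

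\textbf{Step 2 (transport).} By Eq.~\eqref{eq:pa-population-field-risk}, the excess risk equals the field error $\E\|v_\theta-v^\star\|^2_{L^2(p_t)}$. Its pool contribution is therefore $O((B_{\mathrm{mb}}K)^{-1})$, and by Jensen its $L^1$ norm is $O((B_{\mathrm{mb}}K)^{-1/2})$.

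Next, couple the learned and ideal ODEs from the same source noise $\epsilon$. Let $L_v$ be the flow-Lipschitz constant of the learned field $v_\theta$, and set $e_t=|\phi_\theta(\epsilon\mid t)-\phi^\star(\epsilon\mid t)|$. Then
\[
\dot e_t\le L_v e_t+|v_\theta-v^\star|(\phi^\star_t,t),
\]
and Grönwall gives
\[
W_1\le \E e_1\le e^{L_v}\int_0^1\E|v_\theta-v^\star|(X^\star_t,t)\,dt .
\]
The point $X^\star_t$ is distributed as the population bridge marginal, so the integrand is exactly the $L^1(p_t)$ field error, which closes this step.

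\textbf{Step 3 (Bellman).} Fix the policy and let $\delta$ be the per-iteration terminal $W_1$ error. Then
\[
W_1(Z_{n+1},Z^\pi)\le W_1(Z_{n+1},\T^\pi Z_n)+W_1(\T^\pi Z_n,Z^\pi)\le\delta+\gamma W_1(Z_n,Z^\pi).
\]
Unrolling this recursion gives $\limsup_n W_1(Z_n,Z^\pi)\le\delta/(1-\gamma)$. The contraction itself is the one stated in Section~\ref{sec:prelim-mdp}.

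\textbf{Main obstacle.} The delicate step is Step 2, for two reasons.
\begin{itemize}
\item The Lipschitz constant must be the learned field's (not $v^\star$'s), for the coupling of the ODEs to be legitimate. If $v^\star$ is only locally Lipschitz near $t=1$, I would instead integrate up to $1-\delta_0$ and bound the tail separately.
\item The per-time error weighting must be uniform in $t$. I would handle this by assuming $\bar\sigma^2=\sup_t\E\,\sigma^2_\tau$ is finite, which is part of the moment conditions.
\end{itemize}
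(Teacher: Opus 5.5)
Your proposal follows the paper's proof essentially step for step. The chain is the same: an unbiased PA-BCFM gradient whose minibatch variance splits, by total variance, into a $K$-independent base term plus a $C_{\mathrm{pool}}/K$ pool term; the smooth--PL SGD recursion for the floor; Gr\"onwall under the common-source coupling with the \emph{learned} field's Lipschitz constant, then Cauchy--Schwarz/Jensen to reach $\sqrt{2\E\mathcal F(\theta_T)}$; and the $\gamma$-contraction recursion in $\bar W_1$.

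The one point to tighten is in Step 1. Integrating the pointwise expansion $\sigma_\tau^2/K+o(K^{-1})$ over $(X_t,\tau,t)$ needs uniform integrability of $K\Var(\widehat u_K\mid X_t,\tau,t)$, which is the condition the paper invokes, not just integrability of $\sigma_\tau^2$. Conversely, your worry about uniformity in $t$ in the transport step is unnecessary, since $\int_0^1\E|v_\theta-v^\star|\,dt\le\sqrt{2\mathcal F(\theta)}$ by Jensen over $(t,X_t)$ jointly.
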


The complete constants and the changing-policy tracking recursion are given
in Appendix~\ref{app:flow-stability}.
\newcommand{\offlineaggtable}{%
\begin{table}[t]
    \centering
    \small
    \setlength{\tabcolsep}{2.25pt}
    \resizebox{\linewidth}{!}{%
\begin{tabular}{lcccccccc}
        \toprule
        & \multicolumn{2}{c}{Gaussian Policies}
        & \multicolumn{2}{c}{Flow Policies}
        & \multicolumn{4}{c}{Distributional Critics} \\
        \cmidrule(lr){2-3}
        \cmidrule(lr){4-5}
        \cmidrule(lr){6-9}
        Domain
        & IQL & ReBRAC
        & IFQL & FQL & IQN
        & Value Flows & PCBF & \ours{} \\
        \midrule
        \texttt{cube-double-play (5 tasks)}
        & $6 \pm 2$
        & $12 \pm 3$
        & $14 \pm 5$
        & $29 \pm 6$
        & $42 \pm 8$
        & $69 \pm 4$
        & $71 \pm 5$
        & $\mathbf{79 \pm 4}$ \\

        \texttt{cube-triple-play (5 tasks)}
        & $1 \pm 1$
        & $0 \pm 0$
        & $0 \pm 0$
        & $4 \pm 2$
        & $6 \pm 0$
        & $\mathbf{14 \pm 3}$
        & $4 \pm 1$
        & $\mathbf{14 \pm 2}$ \\

        \texttt{puzzle-3x3-play (5 tasks)}
        & $9 \pm 3$
        & $22 \pm 2$
        & $19 \pm 1$
        & $30 \pm 4$
        & $15 \pm 1$
        & $87 \pm 13$
        & $53 \pm 3$
        & $\mathbf{93 \pm 3}$ \\

        \texttt{puzzle-4x4-play (5 tasks)}
        & $7 \pm 2$
        & $14 \pm 3$
        & $25 \pm 8$
        & $17 \pm 5$
        & $27 \pm 4$
        & $27 \pm 4$
        & $30 \pm 4$
        & $\mathbf{52 \pm 6}$ \\

        \texttt{scene-play (5 tasks)}
        & $28 \pm 3$
        & $41 \pm 7$
        & $30 \pm 4$
        & $56 \pm 2$
        & $40 \pm 1$
        & $59 \pm 4$
        & $54 \pm 4$
        & $\mathbf{78 \pm 2}$ \\

        \midrule
        \texttt{visual-antmaze-medium-navigate (5 tasks)}
        & $84 \pm 5$
        & $87 \pm 4$
        & $87 \pm 2$
        & $38 \pm 4$
        & $74 \pm 4$
        & $75 \pm 10$
        & $75 \pm 2$
        & $\mathbf{91 \pm 3}$ \\

        \texttt{visual-antmaze-teleport-navigate (5 tasks)}
        & $6 \pm 4$
        & $4 \pm 1$
        & $10 \pm 4$
        & $5 \pm 2$
        & $4 \pm 2$
        & $13 \pm 4$
        & $14 \pm 4$
        & $\mathbf{22 \pm 5}$ \\

        \texttt{visual-cube-double-play (5 tasks)}
        & $11 \pm 6$
        & $1 \pm 1$
        & $2 \pm 2$
        & $6 \pm 1$
        & $1 \pm 0$
        & $13 \pm 2$
        & $3 \pm 1$
        & $\mathbf{17 \pm 5}$ \\

        \texttt{visual-puzzle-3x3-play (5 tasks)}
        & $2 \pm 3$
        & $20 \pm 1$
        & $21 \pm 0$
        & $20 \pm 1$
        & $19 \pm 1$
        & $\mathbf{23 \pm 2}$
        & $18 \pm 1$
        & $20 \pm 1$ \\

        \texttt{visual-scene-play (5 tasks)}
        & $26 \pm 5$
        & $28 \pm 5$
        & $21 \pm 2$
        & $41 \pm 4$
        & $41 \pm 6$
        & $43 \pm 7$
        & $30 \pm 3$
        & $\mathbf{50 \pm 8}$ \\

        \bottomrule
    \end{tabular}}
    \caption{Offline evaluation on OGBench.
    Each row averages five tasks; state-based and image-based results use eight
    and four seeds. Uncertainty is the arithmetic mean of the five task-level
    standard deviations. Matched flow critics share policy-side components
    while retaining their original critic supervision. Bold denotes the
    highest displayed mean in each domain.}
    \label{tab:offline-eval-agg}
\end{table}
}
\subsection{Conservative Learning and Critic Diagnostics}
\label{sec:endpoint-pool-interface}

Before each critic update, PA-BCFM has formed the weighted endpoint pool
\(\{(Y_j,w_j)\}_{j=1}^K\). This pool supports conservative reweighting and a
training-time disagreement signal. Separate state--action diagnostics are
computed directly from the twin return flows.

\par\indent Let \(g:\mathbb R\to[0,\infty)\) favor the returns retained by a desired
distortion. The reweighted posterior target is
\begin{align}
    w_j^g
    &=
    \frac{w_jg(Y_j)}{\sum_{\ell=1}^{K}w_\ell g(Y_\ell)},
    \label{eq:distorted-weight}\\
    \widehat u_K^g
    &=
    \sum_{j=1}^{K}w_j^g u_j.
    \label{eq:distorted-pa-target}
\end{align}
\par\indent The exact finite-pool Rao--Blackwell identity applies to the undistorted
target \(g\equiv1\), but generally not to a distorted target \(g\ne1\).
    A nonincreasing \(g\) downweights high returns and yields a pessimistically
    distorted target; hard upper-tail truncation and the soft tilt
    \(g_\eta(y)=\exp(-\eta y)\) are two examples
    \citep{kuznetsov2020tqc,kumar2019bear,kumar2020cql,
    ma2021conservative}. If the population distortion is
\(L_g\)-Lipschitz in \(W_p\), its Bellman operator is
\(\gamma L_g\)-Lipschitz and is contractive when \(\gamma L_g<1\).
Appendix~\ref{app:distorted-laws} gives the population definitions and proofs.

\par\indent For \(c\in\{1,2\}\), the target critics share successor actions and return
noises:
\begin{align}
    Y_j^{(c)}
    &=r+\gamma(1-d)
      \phi_{\bar\theta_c}(\epsilon'_j\mid1,s',A'_j),
    \label{eq:twin-flow-endpoints}\\
    Y_j&=\tfrac12\bigl(Y_j^{(1)}+Y_j^{(2)}\bigr),\nonumber\\
    D_{\mathrm{pool}}(\tau,x_t,t,\mathbf Y)
    &=\sum_{j=1}^{K}w_j^0\nonumber\\[-2pt]
    &\quad{}
      \bigl(Y_j^{(1)}-Y_j^{(2)}\bigr)^2.
    \label{eq:twin-flow-disagreement}
\end{align}
\par\indent Here \(A'_j\sim\pi(\cdot\mid s')\) and
\(\epsilon'_j\sim\mathcal N(0,1)\), and \(w_j^0\) denotes the undistorted
posterior weight obtained with \(g\equiv1\). Thus the statistic is defined
before any adaptive reweighting. For task \(e\), let
\(\Delta_e=q_{.95}^{\mathcal D,e}-q_{.05}^{\mathcal D,e}\) be the fixed
offline return-to-go range. The stop-gradient adaptive heuristic is
\begin{equation}
\begin{aligned}
    \eta(\tau,x_t,t)
    &=\eta_0\operatorname{sg}\!\left[
        \operatorname{clip}\!\left(
            \frac{\sqrt{D_{\mathrm{pool}}}}{\Delta_e},0,1
        \right)
    \right],\\
    g_{\tau,x_t,t}(y)
    &=\exp\!\left[-\eta(\tau,x_t,t)\frac{y}{\Delta_e}\right].
\end{aligned}
    \label{eq:adaptive-distortion}
\end{equation}
\par\indent For a fixed distortion \(g\), Appendix~\ref{app:distorted-laws}
establishes consistency and operator-level contraction conditions. The
disagreement-adaptive schedule in Eq.~\eqref{eq:adaptive-distortion} is a
stop-gradient practical extension and is not covered by the fixed-distortion
contraction result; its scope is clarified in
Appendix~\ref{app:adaptive-distortion-contraction}.

\par\indent For a queried action, shared return noises isolate disagreement between the
twin target return flows:
\begin{equation}
\begin{aligned}
    Z_m^{(c)}(s,a)
    &=\phi_{\bar\theta_c}(\xi_m\mid1,s,a),
      \qquad c\in\{1,2\},\\
    \xi_m&\sim\mathcal N(0,1),
      \qquad m=1,\ldots,M_{\mathrm{diag}},\\
    D_{\mathrm{twin}}(s,a)
    &=\frac1{M_{\mathrm{diag}}}
      \sum_{m=1}^{M_{\mathrm{diag}}}
      \bigl(Z_m^{(1)}(s,a)-Z_m^{(2)}(s,a)\bigr)^2.
\end{aligned}
    \label{eq:state-action-twin-disagreement}
\end{equation}
\par\indent For each critic, flow inversion gives
\begin{align}
    \widehat F_c(z\mid s,a)
    &=\Phi\!\left(
        \phi_{\bar\theta_c}^{-1}(z\mid1,s,a)
    \right),
    \label{eq:flow-cdf}\\
    \widehat F_{\mathrm{ens}}(z\mid s,a)
    &=\frac12\sum_{c=1}^{2}\widehat F_c(z\mid s,a),\nonumber\\
    P_{\mathrm{out}}(s,a)
    &=1-\widehat F_{\mathrm{ens}}
      (q_{\mathrm{hi}}^{\mathcal D}\mid s,a).
    \label{eq:support-violation}
\end{align}
\par\indent \(P_{\mathrm{out}}\) is the twin-ensemble mass above an empirical upper-tail
threshold fixed before diagnostic evaluation. Unlike these state--action
diagnostics, \(D_{\mathrm{pool}}\) is conditioned on a training transition
and bridge point.

\subsection{Distributional Policy Improvement}
\label{sec:policy-improvement}

PA-BFM changes only the critic objective: a behavior flow proposes \(N\)
actions, each evaluated with \(M_{\mathrm{act}}\) shared return noises:
\begin{equation}
\begin{aligned}
    a_i&\sim\pi_\beta(\cdot\mid s),
      \qquad i=1,\ldots,N,\\
    \epsilon_{im}&\sim\mathcal N(0,1),
      \qquad m=1,\ldots,M_{\mathrm{act}},\\
    z_{im}^{(c)}
    &=\phi_{\theta_c}(\epsilon_{im}\mid1,s,a_i),
      \qquad c\in\{1,2\}.
\end{aligned}
    \label{eq:policy-return-samples}
\end{equation}
\par\indent The main experiments use a Monte Carlo estimate of mean return,
\begin{equation}
\begin{aligned}
    \widehat\mu_i
    &=\frac{1}{2M_{\mathrm{act}}}
      \sum_{c=1}^{2}\sum_{m=1}^{M_{\mathrm{act}}}z_{im}^{(c)},\\
    a^\star
    &=\operatorname*{arg\,max}_{1\le i\le N}\widehat\mu_i.
\end{aligned}
    \label{eq:distributional-action-selection}
\end{equation}
\par\indent Appendix~\ref{app:policy-extraction} gives the shared
candidate-selection and policy-training procedure. PA-BFM introduces no
method-specific actor-side objective; matched flow-critic methods share the
same policy-side procedure.

\offlineaggtable
\begin{figure}[t]
    \centering
    \includegraphics[width=\textwidth]
    {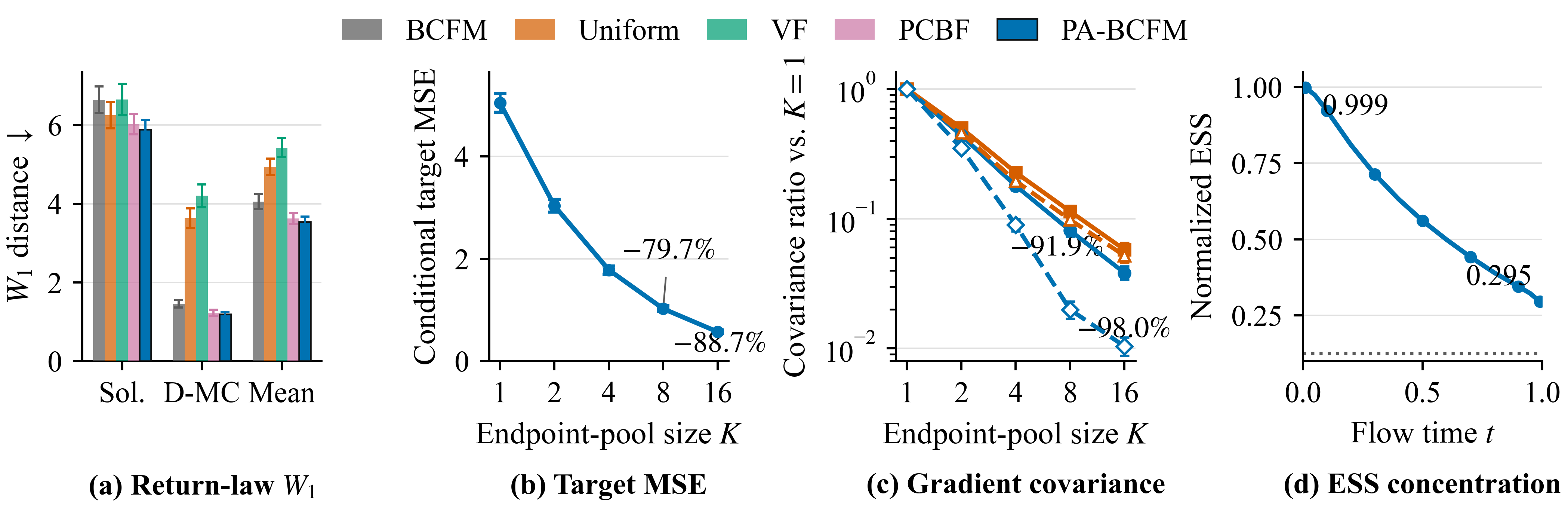}
    \caption{Controlled posterior-target evaluation.
    \emph{(a)} Return-law $W_1$; \emph{(b)} conditional target MSE;
    \emph{(c)} target-induced gradient trace-covariance ratio; and \emph{(d)}
    normalized $\ESS(\mathbf w)/K$ at $K=8$. Sol./D-MC and VF denote
    Solitaire Dice/Discrete-MC and Value Flows; solid/dashed in (c) follow the
    same environment order, and dotted in (d) is $1/K$. Error bars are 95\%
    confidence intervals; (b)--(c) use exact bridge-posterior anchors at fixed
    $(\tau,t,x_t)$.}
    \label{fig:controlled-mrp}
\end{figure}

\section{Experiments}
\label{sec:experiments}

We separate estimator-level evaluation from downstream control.
\textbf{(Q1)} Controlled processes test posterior-target error, gradient
variability, and return-law fidelity. \textbf{(Q2)} Offline and
offline-to-online benchmarks test downstream policy performance.
\textbf{(Q3)} Ablations measure posterior weighting, pool size, and cost.
\textbf{(Q4)} Diagnostics test reliability and pessimistic trade-offs.

The controlled suite comprises a Bernoulli-return process, Solitaire Dice,
and a discrete Monte Carlo chain. Downstream evaluation covers 50 OGBench
tasks and four offline-to-online scenarios
\citep{park2025ogbenchbenchmark}. Baselines span scalar, flow-policy, and
distributional critic families; RLPD appears only offline-to-online
\citep{kostrikov2021,tarasov2023revisitingminimalist,park2025flow,
bellemare2017distributional,dabney2018iqn,ma2021conservative,
dong2026valueflows,xu2026pcbf,ball2023rlpd}.

\paragraph{Methods and evaluation.}
We use a common JAX evaluation pipeline. BCFM, Value Flows, PCBF, and
PA-BFM follow a matched code path and share policy-side and
optimization components while retaining their critic objectives and
flow-time supervision, isolating posterior averaging within this group.
Hyperparameters are tuned once per domain under a common budget; state and
image tasks use eight and four seeds. \(K=8\) except in controlled and
ablation studies. Full implementation, tuning, and evaluation details appear
in
Appendices~\ref{app:additional-experiments}--\ref{app:experiment-details}.

\subsection{Controlled Posterior-Target Estimation}
\label{sec:controlled-mrp}

The controlled suite separates conditional target estimation from
trained-critic fidelity. Bernoulli fixes the configuration, while Solitaire
Dice and Discrete-MC are used for evaluation. For each fixed bridge query
$(\tau,t,x_t)$, every replicate independently resamples the anchor from the
exact bridge posterior and draws the remaining $K-1$ endpoints from
$\mu_\tau$. We compare
BCFM, uniform aggregation, and PA-BCFM for $K\in\{1,2,4,8,16\}$; the
learned-flow study includes Value Flows and PCBF with the fixed coefficient
$\lambda_{\mathrm{cv}}=0.3$~\citep{dong2026valueflows,xu2026pcbf}.

Figure~\ref{fig:controlled-mrp}(b--d) reports fixed-query estimates.
Signed-bias 95\% CIs contain zero for all $K$. Target MSE falls to
$1.022\,[0.97,1.08]$ at $K=8$ and $0.570\,[0.54,0.61]$ at $K=16$,
reductions of $79.7\%$ and $88.7\%$ relative to $K=1$. At $K=8$, the
target-induced gradient trace-covariance ratios are
$0.081\,[0.073,0.090]$ on Solitaire Dice and $0.020\,[0.017,0.023]$ on
Discrete-MC. Normalized $\ESS(\mathbf w)/K$ falls from $0.999$ at $t=0.01$ to $0.295$ at
$t=0.99$, indicating broad early averaging and sharper late selection.
Figure~\ref{fig:controlled-mrp}(a) reports seed-level return-law fidelity:
PA-BCFM has the lowest mean $W_1$ on both processes and overall,
$3.538\,[3.41,3.67]$; its paired difference from PCBF is
$-0.085\,[-0.156,-0.014]$.

\setcounter{topnumber}{3}
\begin{figure}[ht]
    \centering
    \includegraphics[width=0.68\linewidth]
    {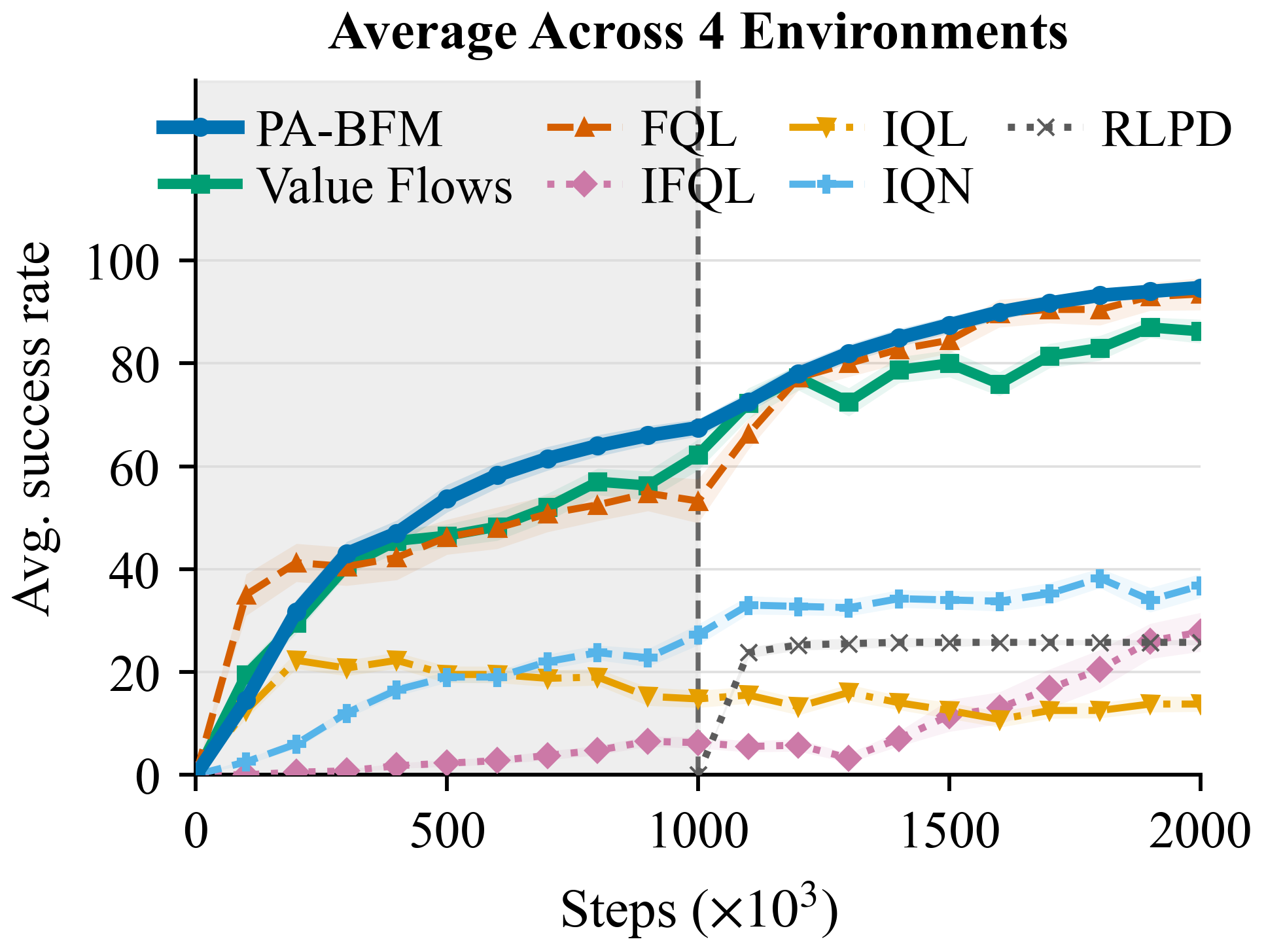}
    \caption{Offline-to-online evaluation.
    Mean success and one-standard-deviation bands over eight seeds on four
    OGBench scenarios. Gray marks one million offline optimizer updates,
    followed by one million online environment steps.}
    \label{fig:o2o-aggregate}
\end{figure}

\begin{figure}[t]
    \centering
    \includegraphics[width=0.68\linewidth]
    {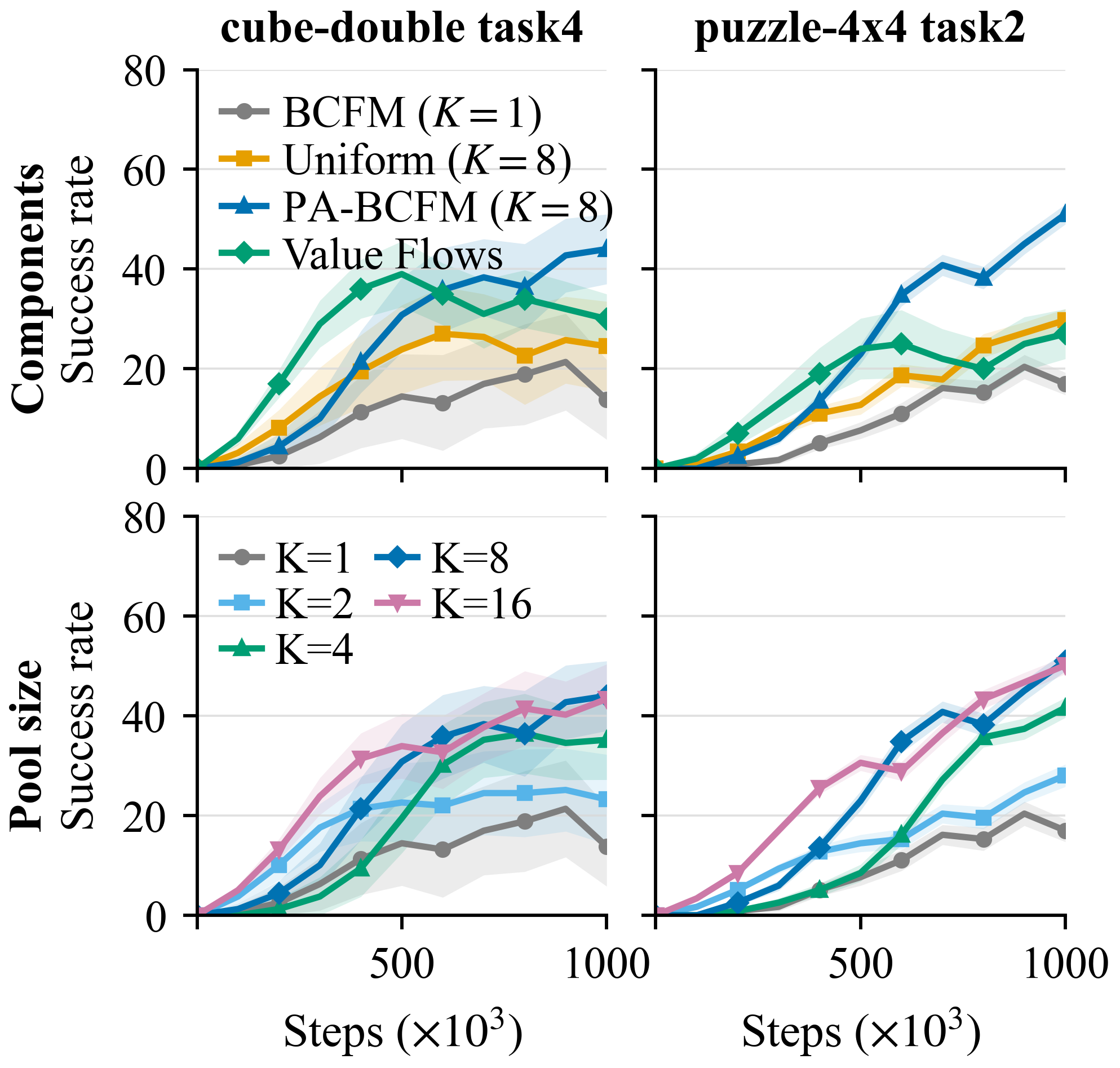}
    \caption{Estimator components and pool-size sensitivity.
    Top: BCFM ($K=1$), uniform and PA-BCFM ($K=8$), and Value Flows.
    Bottom: pool-size sensitivity on two state-based tasks.}
    \label{fig:estimator-ablation}
\end{figure}

\begin{figure}[!t]
    \centering
    \includegraphics[width=0.68\linewidth]
    {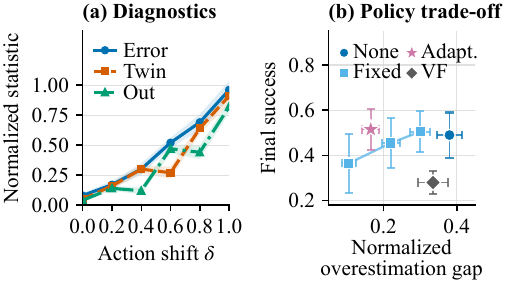}
    \caption{Critic diagnostics and adaptive pessimism on
    \texttt{puzzle-4x4-play-task4}.
    Left: Error, Twin, and Out denote absolute prediction error, twin
    disagreement, and unsupported mass. Right: None, Fixed, Adapt., and VF
    denote undistorted, fixed, adaptive, and Value Flows variants.}
    \label{fig:pool-diagnostics}
\end{figure}

\subsection{Offline and Online Performance}
\label{sec:offline-o2o-performance}

Table~\ref{tab:offline-eval-agg} summarizes offline performance. Fixed
policy-side components isolate critic learning among matched flow critics.
PA-BFM has the unique highest displayed mean on
eight of ten domains and ties for the highest mean on one more. Relative to
Value Flows, its state-based average rises from $51.2$ to $63.2$, its
image-based average from $33.4$ to $40.0$, and the ten-domain average from
$42.3$ to $51.6$. The task-level record is $35$ wins, $7$ ties, and $8$
losses across both observation modalities. Full task-level results appear in Appendix
Tables~\ref{tab:offline-eval-state-full}--\ref{tab:offline-eval-visual-full}
of the supplement.

After one million offline optimizer updates and one million online environment
steps, with an unchanged PA-BFM critic objective,
Figure~\ref{fig:o2o-aggregate} shows mean aggregate success of
$67.5$ at online entry and a final mean of $94.75$, versus $62.25$ and $86.25$
for Value Flows and $53.25$ and $93.5$ for FQL.

\subsection{Estimator and Pool-Size Sensitivity}
\label{sec:estimator-ablation}

Figure~\ref{fig:estimator-ablation} isolates posterior weighting under matched
$K=8$ endpoints and training; only PA-BCFM weights endpoints by bridge
compatibility. It improves final success by $19.5$ and $21.2$ points
over uniform aggregation on \texttt{cube-double task4} and
\texttt{puzzle-4x4 task2}, and by $14$ and $24$ points over Value Flows.

The pool-size sweep shows diminishing returns beyond $K=8$: increasing
$K$ from $1$ to $8$ raises final success from $13.8$ to $44$ and from $17$ to
$51$, whereas $K=16$ gives $43.4$ and $50.1$. On the representative state and
visual configurations, the reported $10^6$-update costs at $K=8$ are $4.0$
and $11.8$ hours, respectively, below the corresponding Value Flows times of
$4.4$ and $12.5$ hours. This
performance--cost saturation motivates the default $K=8$.

\subsection{Critic Diagnostics and Adaptive Pessimism}
\label{sec:critic-diagnostics}

On \texttt{puzzle-4x4-play-task4} at the final offline checkpoint, twin-critic
disagreement and unsupported mass track absolute prediction error under action
shift, with Spearman correlations $\rho_{\mathrm{twin}}=0.943$ and
$\rho_{\mathrm{out}}=0.886$. Fixed tilting reduces positive overestimation but
eventually degrades success; the adaptive rule reduces it from $0.380$ to
$0.165$ while raising final success from $0.490$ to $0.515$.
Appendix~\ref{app:pool-diagnostics} provides the evaluation protocol and
complete results for all evaluated fixed strengths.
\section{Conclusion}

We introduced PA-BFM, whose default target is an exact finite-pool
Rao--Blackwellization of the single-endpoint BCFM direction under the exact latent-anchor posterior.
It preserves the population velocity for every \(K\) and cannot increase
conditional target variance. We derive \(K^{-1}\)-scaling variance and,
under stated assumptions, optimization, terminal Wasserstein, and Bellman
tracking bounds. The endpoint pool enables conservative reweighting, while
the twin return flows provide critic-disagreement and unsupported-mass
diagnostics. Controlled and OGBench experiments show improved
estimation, return-law fidelity, and aggregate offline and offline-to-online
performance over Value Flows.

\begingroup
\small
\FloatBarrier
\bibliographystyle{plainnat}
\bibliography{references}
\endgroup

\clearpage
\setcounter{secnumdepth}{2}
\appendix
\section{Preliminaries}
\label{app:preliminaries}

\subsection{Core Notation and Bellman Endpoint Law}
\label{app:notation}

Fix a state--action pair and transition outcome
\(\tau=(r,s',d)\), and let \(\mu_\tau\) be the conditional Bellman endpoint
law. The estimator draws a pool whose twin-critic candidates are shared-noise
ensemble means:
\begin{align}
    \mathbf Y&=(Y_1,\ldots,Y_K),
    \qquad
    Y_j\overset{\mathrm{i.i.d.}}{\sim}\mu_\tau,
    \label{eq:app-abstract-endpoint-pool}\\
    Y_j^{(c)}
    &=r+\gamma(1-d)
      \phi_{\bar\theta_c}(\epsilon'_j\mid1,s',A'_j),
      \qquad c\in\{1,2\},\notag\\
    Y_j&=\tfrac12\bigl(Y_j^{(1)}+Y_j^{(2)}\bigr).
    \label{eq:app-conditional-endpoints}
\end{align}
\par\indent Here \(A'_j\sim\pi(\cdot\mid s')\),
\(\epsilon'_j\sim\mathcal N(0,1)\), and \(j=1,\ldots,K\). Candidates are
independent across \(j\), both target critics share
\((A'_j,\epsilon'_j)\), and \(\mu_\tau=\operatorname{Law}(Y_j\mid\tau)\).

\begin{table}[t]
    \centering
    \small
    \setlength{\tabcolsep}{2pt}
    \begin{tabular}{p{0.12\textwidth}p{0.34\textwidth}
                    p{0.12\textwidth}p{0.34\textwidth}}
        \toprule
        Symbol & Meaning & Symbol & Meaning \\
        \midrule
        \(\tau=(r,s',d)\) & transition outcome for fixed \((s,a)\)
        & \(\mu_\tau\) & conditional Bellman endpoint law \\
        \(Y\), \(\mathbf Y\) & endpoint and pool \((Y_1,\ldots,Y_K)\)
        & \(I\) & uniformly sampled anchor index \\
        \(X_t,x_t\) & bridge random variable and realization
        & \(\epsilon,p_0\) & \(\epsilon\sim\mathcal N(0,1)\) and its density \(p_0\) \\
        \(\ell_t(x\mid y)\) & bridge likelihood
        & \(u_t(x,y)\) & endpoint-conditioned bridge direction \\
        \(u_j\) & candidate direction \(u_t(x_t,Y_j)\)
        & \(w_j\) & posterior anchor responsibility \\
        \(\widehat u_K\) & posterior-mean training target
        & \(v_\theta\) & learned velocity field \\
        \(v_\tau^\star\) & transition-conditional population velocity
        & \(\ESS(\mathbf w)\) & \(1/\sum_jw_j^2\) \\
        \(K\) & ensemble-mean endpoint candidates in \(\mathbf Y\)
        & \(c\in\{1,2\}\) & critic index \\
        \(j\) & endpoint-candidate index, \(1\le j\le K\)
        & \(m\) & action-evaluation sample index, \(1\le m\le M_{\mathrm{act}}\) \\
        \bottomrule
    \end{tabular}
    \caption{Core notation.
    Uppercase variables denote random quantities and lowercase variables their
    realizations. Implementation- and proof-specific symbols are defined in
    the subsections where they are used.}
    \label{tab:appendix-notation}
\end{table}

\subsection{Distributional Bellman Operators}
\label{app:distributional-operators}

Let \(R\) and \(D\) denote reward and termination random variables. For a
fixed policy \(\pi\), the return satisfies
\begin{equation}
    \begin{aligned}
        Z^\pi(s,a)
        &\overset{d}{=}
        R+\gamma(1-D)Z^\pi(S',A'),\\[-2pt]
        A'&\sim\pi(\cdot\mid S').
    \end{aligned}
    \label{eq:app-distributional-bellman}
\end{equation}
\par\indent Under standard finite-moment assumptions, the induced operator is
a \(\gamma\)-contraction in maximal Wasserstein distance
~\citep{bellemare2017distributional}. For \(d=1\), the conditional endpoint
law is the point mass at \(r\).
\par\indent For an approximate iteration
\(Z_{n+1}=\widehat{\mathcal T}_nZ_n\), we write
\(\varepsilon_{K,n}=\bar W_1(\widehat{\mathcal T}_nZ_n,
\mathcal T^\pi Z_n)\);
Appendix~\ref{app:proof-td-propagation} propagates this error.

\subsection{One-Dimensional Return Flows}
\label{app:flow-matching}

A conditional return flow solves
\begin{equation}
    \begin{aligned}
        \frac{d}{dt}\phi_\theta(\epsilon\mid t,s,a)
        &=
        v_\theta\!\left(
            \phi_\theta(\epsilon\mid t,s,a),t,s,a
        \right),
        \\
        \phi_\theta(\epsilon\mid0,s,a)&=\epsilon.
    \end{aligned}
    \label{eq:app-flow-ode}
\end{equation}
\par\indent In one dimension, uniqueness of the ODE makes the terminal map
monotone and invertible on its image, enabling the inverse-flow CDF used in
Appendix~\ref{app:adaptive-distortion-contraction}.
\section{Theoretical Analysis}
\label{app:pa-bcfm-proofs}

This appendix proves the exact finite-pool Rao--Blackwell identity and its
variance, optimization, transport, and Bellman-tracking implications. Each
estimator analysis fixes the transition, target policy, and target critic, and
uses \(t\in[0,1-\delta]\) for \(\delta>0\).

\subsection{Path Geometry and Population Bridge Target}
\label{app:path-geometry}
\label{app:posterior-weights}

Fix \((s,a)\), let \((R,S',D)\) be the transition variables with realization
\(\tau=(r,s',d)\), and set \(X_0=\epsilon\sim\mathcal N(0,1)\). The endpoint
law \(\mu_\tau\) is defined in
Eq.~\eqref{eq:conditional-bellman-endpoint-law}. The results condition on this
law and hold for any fixed endpoint law satisfying the stated moments.

\paragraph{Slice-wise Bellman geometry.}
For a nonterminal transition (\(d=0\)), let the successor path be
$X_t'=(1-t)X_0+tX_1'$ with density
$p_t(\cdot\mid S',A')$ and velocity
$v_t(\cdot\mid S',A')$, define the slice-wise affine Bellman path
$X_t^{\mathrm{sl}}=r+\gamma X_t'$.
Conditional on $R=r$, the change of variables
$z=r+\gamma\xi$ gives the mixed density and velocity
\begin{align}
    q_t^{\mathrm{slice}}(z)
    =
    \frac{1}{\gamma}
    \E_{S',A'}
    \left[
        p_t(\xi\mid S',A')
    \right],
    \qquad
    \xi=\frac{z-r}{\gamma},
    \label{eq:app-slice-density}
    \\
    u_t^{\mathrm{slice}}(z)
    =
    \gamma
    \frac{
        \E\!
        \left[
            p_t(\xi\mid S',A')v_t(\xi\mid S',A')
        \right]
    }{
        \E\!
        \left[
            p_t(\xi\mid S',A')
        \right]
    }.
    \label{eq:app-slice-velocity}
\end{align}
\par\indent The second identity follows because each branch has
\(\dot X_t^{\mathrm{sl}}=\gamma\dot X_t'\), so its numerator is the mixed
probability flux.

At $t=0$, however,
$X_0^{\mathrm{sl}}=r+\gamma X_0\sim\mathcal N(r,\gamma^2)$, so the slice-wise path does
not satisfy the common Gaussian source used by the critic.
For a terminal transition, \(Y=r\) and \(\mu_\tau=\delta_r\).

\paragraph{Source-consistent endpoint bridge.}
The bridge used by BCFM and PA-BCFM instead draws
$Y=r+\gamma(1-d)X_1'$ and sets
\begin{equation}
    \begin{alignedat}{2}
    X_t&=(1-t)X_0+tY,\qquad & X_0&\perp Y,\\
    X_0&\sim\mathcal N(0,1), & Y&\sim\mu_\tau.
    \end{alignedat}
    \label{eq:app-source-consistent-bridge}
\end{equation}

Using the same source noise in the nonterminal successor path gives
\begin{align}
    X_t
    &=(1-t)X_0+t(R+\gamma X_1') \\
    &=R+\gamma X_t'
      +(1-t)\big[(1-\gamma)X_0-R\big],
    \label{eq:app-path-residual}
\end{align}
which proves Eq.~\eqref{eq:path-residual-correction}.
The residual vanishes at $t=1$, but not generally at intermediate times;
thus the two paths share the Bellman endpoint without sharing their
intermediate marginals or velocity fields.

Independence of $X_0$ and $Y\sim\mu_\tau$ gives
\begin{equation}
    p(X_t=x,Y\in dy\mid\tau)
    =
    \ell_t(x\mid y)\mu_\tau(dy).
\end{equation}

Bayes' rule therefore yields
\begin{equation}
    \begin{aligned}
    \nu_{\tau,x}(dy)
    &=
    \frac{\ell_t(x\mid y)\mu_\tau(dy)}{B_\tau(x,t)},
    \\
    v_\tau^\star(x,t)
    &=
    \int u_t(x,y)\nu_{\tau,x}(dy).
    \end{aligned}
    \label{eq:app-posterior-law}
\end{equation}

\begin{lemma}[Population target of BCFM]
\label{lem:bcfm-population-target}
Fix $(\tau,x,t)$ with $t<1$.
For the source-consistent bridge in
Eq.~\eqref{eq:app-source-consistent-bridge}, the conditional BCFM risk
\begin{equation}
    \mathcal R_\tau(u;x,t)
    =
    \E
    \left[
        \left(
            u-(Y-X_0)
        \right)^2
        \middle|
        X_t=x,\tau
    \right]
\end{equation}
has the unique $L_2$ minimizer
\begin{equation}
    u^\star
    =
    v_\tau^\star(x,t)
    =
    \E
    \left[
        \frac{Y-x}{1-t}
        \middle|
        X_t=x,\tau
    \right].
    \label{eq:app-bcfm-population-target}
\end{equation}
\end{lemma}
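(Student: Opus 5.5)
The plan is to treat this as the standard fact that conditional expectation minimizes squared loss, applied to the conditional law of $(X_0,Y)$ given $X_t=x$ and $\tau$. The first step rewrites the regression label in terms of the endpoint alone. On the event $X_t=x$ with $t<1$, the bridge identity $x=(1-t)X_0+tY$ gives $X_0=(x-tY)/(1-t)$. Hence
\[
Y-X_0=\frac{(1-t)Y-x+tY}{1-t}=\frac{Y-x}{1-t}=u_t(x,Y).
\]
So the conditional risk is $\mathcal R_\tau(u;x,t)=\E[(u-u_t(x,Y))^2\mid X_t=x,\tau]$, with $Y$ distributed according to the posterior $\nu_{\tau,x}$ of Eq.~\eqref{eq:app-posterior-law}.

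The second step makes this conditioning rigorous. Because $X_0\perp Y$ and $X_0$ has the Gaussian density $p_0$, the joint law of $(X_t,Y)$ is $\ell_t(x\mid y)\mu_\tau(dy)\,dx$. The normalizer $B_\tau(x,t)=\int\ell_t(x\mid y)\mu_\tau(dy)$ is strictly positive, since $\ell_t>0$ everywhere for $t<1$. Consequently the regular conditional law of $Y$ given $X_t=x$ is exactly $\nu_{\tau,x}$, defined pointwise in $x$ rather than merely almost everywhere.

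The third step is the bias--variance decomposition. Assuming $\int u_t(x,y)^2\nu_{\tau,x}(dy)<\infty$, which is the finite conditional second moment assumed in Theorem~\ref{thm:pa-bcfm-estimator}, expand
\[
\mathcal R_\tau(u;x,t)=(u-v_\tau^\star(x,t))^2+\Var_{\nu_{\tau,x}}\bigl(u_t(x,Y)\bigr).
\]
The cross term vanishes because $v_\tau^\star=\int u_t\,d\nu_{\tau,x}$. The risk is therefore a strictly convex quadratic in $u$, and its unique minimizer is $u^\star=v_\tau^\star(x,t)$. This matches Eq.~\eqref{eq:app-bcfm-population-target} and the definition in Eq.~\eqref{eq:transition-bridge-velocity}.

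The main obstacle is not algebraic. It is justifying that conditioning on the null event $\{X_t=x\}$ is well defined pointwise. This is resolved by the explicit density and the positivity of $B_\tau$ in step two. The terminal case $\mu_\tau=\delta_r$ needs no separate treatment: there the posterior is $\delta_r$ and $u^\star=(r-x)/(1-t)$.
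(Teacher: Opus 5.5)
Your proposal is correct and follows essentially the same route as the paper: on $\{X_t=x\}$, rewrite the label as $u_t(x,Y)=(Y-x)/(1-t)$, then use the bias--variance decomposition of the conditional squared risk. Your explicit justification of the conditioning, via the joint density $\ell_t(x\mid y)\mu_\tau(dy)\,dx$ and the positive normalizer $B_\tau$, and your explicit finite second-moment assumption are refinements that the paper leaves implicit.
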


\begin{proof}
Since $X_t=(1-t)X_0+tY$, conditioning on $X_t=x$ gives
\begin{equation}
    Y-X_0
    =
    \frac{Y-x}{1-t}
    =
    u_t(x,Y).
\end{equation}

Let
$m_\tau(x,t)=\E[u_t(x,Y)\mid X_t=x,\tau]$.
The conditional squared risk decomposes as
\begin{align}
    \mathcal R_\tau(u;x,t)
    &=
    \E
    \left[
        \bigl(u-m_\tau+m_\tau-u_t(x,Y)\bigr)^2
        \middle|
        X_t=x,\tau
    \right] \\
    &=\bigl(u-m_\tau(x,t)\bigr)^2\\
    &\quad+\Var
    \left[
        u_t(x,Y)
        \middle|
        X_t=x,\tau
    \right].
\end{align}

The second term is independent of $u$, which proves the result.
\end{proof}

\paragraph{Transition mixture.}
Averaging \(v_\tau^\star(x,t)\) under the posterior transition weight
proportional to \(B_\tau(x,t)p(\tau\mid s,a)\) recovers the
source-consistent population bridge velocity. Endpoint pooling therefore
estimates the transition-conditional velocity, while replay sampling performs
the outer Bellman mixture.

\paragraph{Degeneracy of practical DCFM.}
\begin{remark}
The practical DCFM residual admits degenerate shared fields. For example,
\(v_\theta(z,t;s,a)=v_{\bar\theta}(z,t;s,a)=c(t)\) makes the residual vanish
independently of the reward and transition. Thus the residual alone does not
identify the Bellman transform; this observation does not assert that every
optimization trajectory converges to a constant field.
\end{remark}

\subsection{Finite-Pool Rao--Blackwellization}
\label{app:anchor-reuse}
\label{app:proof-pa-estimator}

Fix a transition outcome \(\tau\) and a flow time \(t<1\). Draw
\begin{equation}
    \begin{aligned}
    \mathbf Y&=(Y_1,\ldots,Y_K),
    \qquad Y_j\overset{\mathrm{i.i.d.}}{\sim}\mu_\tau,\\
    I&\sim\operatorname{Unif}\{1,\ldots,K\},
    \qquad X_t=(1-t)\epsilon+tY_I.
    \end{aligned}
    \label{eq:app-rb-sampling}
\end{equation}
where \(I\) is independent of the endpoint values and
\(\epsilon\sim\mathcal N(0,1)\). For a realized bridge point \(x\), define
\begin{equation}
    \begin{aligned}
    u_j(x,t)&=\frac{Y_j-x}{1-t},\qquad
    \omega_j(x,t)=\ell_t(x\mid Y_j),\\
    w_j(x,t)&=\frac{\omega_j(x,t)}
    {\sum_{q=1}^K \omega_q(x,t)}.
    \end{aligned}
    \label{eq:app-rb-components}
\end{equation}
\par\indent The single-endpoint BCFM target is
\(U_I=u_I(X_t,t)=Y_I-\epsilon\), whereas PA-BCFM uses
\begin{equation}
    \widehat u_K(X_t,t)=\sum_{j=1}^K w_j(X_t,t)u_j(X_t,t).
    \label{eq:app-rb-target}
\end{equation}

\paragraph{Posterior anchor index.}
Conditioned on \((X_t=x,\mathbf Y,\tau,t)\), the density of \(x\) under the
hypothesis \(I=j\) is \(\omega_j(x,t)\). Since the prior over \(I\) is uniform,
Bayes' rule gives
\begin{equation}
    \Pr(I=j\mid X_t=x,\mathbf Y,\tau,t)
    =w_j(x,t).
    \label{eq:app-anchor-posterior}
\end{equation}
\par\indent Consequently,
\begin{equation}
    \widehat u_K
    =
    \E\!\left[
        U_I
        \middle|
        X_t,\mathbf Y,\tau,t
    \right].
    \label{eq:app-rb-identity}
\end{equation}
\par\indent Taking the conditional expectation once more and using
Lemma~\ref{lem:bcfm-population-target} yields
\begin{equation}
    \begin{aligned}
    \E[\widehat u_K\mid X_t,\tau,t]
    &=\E[U_I\mid X_t,\tau,t]\\
    &=v_\tau^\star(X_t,t),
    \qquad \text{for every }K\ge1.
    \end{aligned}
    \label{eq:app-finite-k-unbiasedness}
\end{equation}
\par\indent At \(K=1\), \(\widehat u_1=U_I=Y_I-\epsilon\), proving exact reduction to
BCFM.

The outer transition mixture also gives
\(\E[\widehat u_K\mid X_t,s,a,t]=v^\star(X_t,t\mid s,a)\). Hence, for any
square-integrable prediction \(u(X_t,s,a,t)\),
\begin{align}
    \E\bigl[(u-\widehat u_K)^2\bigr]
    &=
    \E\bigl[(u-v^\star)^2\bigr]
    +
    \E\bigl[(\widehat u_K-v^\star)^2\bigr].
    \label{eq:app-rb-risk-decomposition}
\end{align}
\par\indent The second term does not depend on \(u\), so every finite pool preserves the
population minimizer.

\paragraph{Exact finite-pool variance reduction.}
Applying conditional total variance to
Eq.~\eqref{eq:app-rb-identity} gives
\begin{align}
    \Var(U_I\mid X_t,\tau,t)
    &=
    \Var(\widehat u_K\mid X_t,\tau,t)
    \nonumber\\
    &\quad+
    \E\!\left[
        \sum_{j=1}^K
        w_j(u_j-\widehat u_K)^2
        \middle|
        X_t,\tau,t
    \right].
    \label{eq:app-rb-total-variance}
\end{align}
\par\indent Therefore
\begin{equation}
    \Var(\widehat u_K\mid X_t,\tau,t)
    \le
    \Var(U_I\mid X_t,\tau,t)
    \label{eq:app-rb-variance-order}
\end{equation}
for every finite \(K\). The reduction is strict whenever the posterior over
anchor indices assigns positive mass to endpoints with different velocities.

\paragraph{Relation to leave-one-out SNIS.}
For asymptotic analysis, condition on the selected anchor and relabel it as
\(Y_0\). Then \(Y_0\sim\nu_{\tau,x}\), while
\(Y_1,\ldots,Y_{K-1}\) are i.i.d. from \(\mu_\tau\). Define
\begin{equation}
    \widetilde u_{K-1}
    =
    \frac{\sum_{j=1}^{K-1}\omega_ju_j}{\sum_{j=1}^{K-1}\omega_j}.
    \label{eq:app-loo-snis}
\end{equation}
\par\indent Direct algebra yields
\begin{equation}
\begin{aligned}
    \widehat u_K-\widetilde u_{K-1}
    &=
    \alpha_{0,K}
    \bigl(u_0-\widetilde u_{K-1}\bigr),\\
    \alpha_{0,K}
    &=
    \frac{\omega_0}{\sum_{j=0}^{K-1}\omega_j}.
\end{aligned}
\label{eq:app-aip-loo-decomposition}
\end{equation}
\par\indent Under a positive normalizer and the stated moment conditions,
\(\alpha_{0,K}=O_p(K^{-1})\), so
\begin{equation}
    \widehat u_K-\widetilde u_{K-1}=O_p(K^{-1}).
    \label{eq:app-aip-loo-rate}
\end{equation}
\par\indent The practical estimator is therefore consistent and has the same first-order
asymptotic distribution as standard SNIS. In finite samples, however,
Eq.~\eqref{eq:app-finite-k-unbiasedness} is the sharper statement: the anchor
correction exactly cancels the ratio bias in expectation under the uniform
anchor-sampling protocol.

\subsection{Asymptotic Variance, ESS, and Gradient Covariance}
\label{app:asymptotic-ess}
\label{app:proof-mean-equivalence}
\label{app:proof-ess}

\paragraph{Ratio asymptotics.}
Fix \((x,t,\tau)\), and write
\(\omega(Y)=\ell_t(x\mid Y)\),
\(u(Y)=u_t(x,Y)\),
\(B_\tau(x,t)=\E_{\mu_\tau}[\omega(Y)]\), and
\(v_\tau^\star=\E[\omega(Y)u(Y)]/B_\tau(x,t)\).
The multivariate central limit theorem for
\((\omega(Y)u(Y),\omega(Y))\), followed by the delta method, gives influence variable
\begin{equation}
    \psi(Y)
    =
    \frac{\omega(Y)(u(Y)-v_\tau^\star)}{B_\tau(x,t)}
\end{equation}
and asymptotic variance
\begin{equation}
    \sigma_\tau^2(x,t)
    =
    \frac{
        \E_{\mu_\tau}
        \left[
            \omega(Y)^2(u(Y)-v_\tau^\star)^2
        \right]
    }{\left(\E_{\mu_\tau}[\omega(Y)]\right)^2}.
    \label{eq:app-rb-asymptotic-variance}
\end{equation}
\par\indent Because the anchor correction in
Eq.~\eqref{eq:app-aip-loo-rate} is \(o_p(K^{-1/2})\), Slutsky's theorem gives
\begin{equation}
    \begin{aligned}
    \sqrt K(\widehat u_K-v_\tau^\star)
    &\Rightarrow \mathcal N(0,\sigma_\tau^2(x,t)),\\
    \Var(\widehat u_K\mid x,t,\tau)
    &=\frac{\sigma_\tau^2(x,t)}{K}+o(K^{-1}).
    \end{aligned}
    \label{eq:app-rb-clt}
\end{equation}

\paragraph{Effective sample size.}
The realized effective sample size is
\begin{equation}
    \ESS(\mathbf w)
    =
    \frac{(\sum_j \omega_j)^2}{\sum_j \omega_j^2}
    =
    \frac{1}{\sum_jw_j^2}.
    \label{eq:app-ess-definition}
\end{equation}
\par\indent The anchor contributes only one term, so the strong law applied to the
nonanchor endpoints yields
\begin{equation}
    \frac{\ESS(\mathbf w)}{K}
    \xrightarrow{a.s.}
    \chi_\tau(x,t)
    =
    \frac{\left(\E_{\mu_\tau}[\omega(Y)]\right)^2}
         {\E_{\mu_\tau}[\omega(Y)^2]}.
    \label{eq:app-ess-limit}
\end{equation}
\par\indent If \(|u(Y)-v_\tau^\star|\le C_{u,\tau}(x,t)\), then
\begin{equation}
    \begin{aligned}
    \sigma_\tau^2(x,t)
    &\le \frac{C_{u,\tau}(x,t)^2}{\chi_\tau(x,t)},\\
    \Var(\widehat u_K\mid x,t,\tau)
    &\le \frac{C_{u,\tau}(x,t)^2}{K\chi_\tau(x,t)}+o(K^{-1}).
    \end{aligned}
    \label{eq:app-ess-variance-bound}
\end{equation}
\par\indent Thus ESS measures posterior-weight concentration and controls an asymptotic
variance bound; it is not an unconditional exact variance identity.

\paragraph{Gradient Rao--Blackwellization.}
Fix \((X_t,\tau,t,\theta)\), and let
\(J_\theta=\nabla_\theta v_\theta(X_t,t;s,a)\). The single-endpoint BCFM
gradient and PA-BCFM gradient are
\begin{equation}
    g_1=2J_\theta\bigl(v_\theta-U_I\bigr),
    \qquad
    g_K=2J_\theta\bigl(v_\theta-\widehat u_K\bigr).
    \label{eq:app-gradient-definitions}
\end{equation}
\par\indent Equation~\eqref{eq:app-rb-identity} implies
\begin{equation}
    \begin{aligned}
    g_K
    &=\E[g_1\mid X_t,\mathbf Y,\tau,t,\theta],\\
    \Cov(g_K\mid X_t,\tau,t,\theta)
    &\preceq \Cov(g_1\mid X_t,\tau,t,\theta).
    \end{aligned}
    \label{eq:app-gradient-rb}
\end{equation}
\par\indent Moreover,
\begin{equation}
    \Cov(g_K\mid x,t,\tau,\theta)
    =
    \frac{4\sigma_\tau^2(x,t)}{K}
    J_\theta J_\theta^\top
    +o(K^{-1}).
    \label{eq:app-gradient-asymptotic-covariance}
\end{equation}
\par\indent This result requires no conditional independence assumption among candidate
gradients.

\subsection{Optimization, Transport, and Bellman Tracking}
\label{app:control-variate-propagation}
\label{app:flow-stability}
\label{app:proof-td-propagation}
\label{app:pcbf-comparison}

\paragraph{Relation to PCBF.}
PCBF adds a learned successor-field control variate with coefficient
\(\lambda_{\mathrm{cv}}\), so its population field differs from the
source-consistent target by approximation and cross-conditioning terms
\citep{xu2026pcbf}. PA-BCFM instead preserves the exact population target for
every finite \(K\); its pool-dependent error is variance rather than target
bias.

\paragraph{Optimization effect of the endpoint pool.}
Define the population field risk
\begin{equation}
    \begin{aligned}
    \mathcal F(\theta)
    &=\frac12\E\!\left[
        |v_\theta(X_t,t;s,a)-v^\star(X_t,t;s,a)|^2
    \right],\\
    \mathcal F_\star&=\inf_\theta\mathcal F(\theta).
    \end{aligned}
    \label{eq:app-field-risk}
\end{equation}
\par\indent Exact finite-pool unbiasedness implies that the PA-BCFM stochastic gradient is
unbiased for \(\nabla\mathcal F\). Under uniform integrability of the
conditional variance bound in Appendix~\ref{app:asymptotic-ess}, total
variance yields constants \(\sigma_{\mathrm{base}}^2\) and
\(C_{\mathrm{pool}}\) such that a minibatch of size \(B_{\mathrm{mb}}\) satisfies
\begin{equation}
    \E\!\left[
        \|\widehat g_{K,B_{\mathrm{mb}}}-\nabla\mathcal F(\theta)\|^2
    \right]
    \le
    \frac1{B_{\mathrm{mb}}}
    \left(
        \sigma_{\mathrm{base}}^2
        +
        \frac{C_{\mathrm{pool}}}{K}
    \right).
    \label{eq:app-k-gradient-variance}
\end{equation}
\par\indent The first term collects transition, state--action, and flow-time randomness;
the second is the endpoint-pool contribution.

Assume \(\mathcal F\) is \(L_{\mathcal F}\)-smooth and satisfies the
Polyak--\L{}ojasiewicz condition with constant \(\mu_{\mathrm{PL}}\), using
\(\tfrac12\|\nabla\mathcal F(\theta)\|_2^2
\ge\mu_{\mathrm{PL}}(\mathcal F(\theta)-\mathcal F_\star)\). For
\(\theta_{n+1}=\theta_n-\eta_{\mathrm{opt}}\widehat g_{K,B_{\mathrm{mb}}}\) with \(\eta_{\mathrm{opt}}\le1/L_{\mathcal F}\),
smoothness and unbiasedness give
\begin{align}
    \E[\mathcal F(\theta_{n+1})-\mathcal F_\star\mid\theta_n]
    &\le
    (1-\mu_{\mathrm{PL}}\eta_{\mathrm{opt}})
    (\mathcal F(\theta_n)-\mathcal F_\star)
    \nonumber\\
    &\quad+
    \frac{L_{\mathcal F}\eta_{\mathrm{opt}}^2}{2B_{\mathrm{mb}}}
    \left(
        \sigma_{\mathrm{base}}^2
        +
        \frac{C_{\mathrm{pool}}}{K}
    \right).
\end{align}
\par\indent Unrolling the recursion proves
\begin{align}
    \E[\mathcal F(\theta_T)-\mathcal F_\star]
    &\le
    (1-\mu_{\mathrm{PL}}\eta_{\mathrm{opt}})^T
    \bigl(\mathcal F(\theta_0)-\mathcal F_\star\bigr)
    \nonumber\\
    &\quad+
    \frac{L_{\mathcal F}\eta_{\mathrm{opt}}}{2\mu_{\mathrm{PL}}B_{\mathrm{mb}}}
    \left(
        \sigma_{\mathrm{base}}^2
        +
        \frac{C_{\mathrm{pool}}}{K}
    \right),
    \label{eq:app-k-sgd-bound}
\end{align}
which gives the optimization statement in
Corollary~\ref{cor:pa-training-transport}.

\paragraph{Velocity-to-endpoint stability.}
Couple the ideal and learned ODEs by their common source and let
\(e_t=\widehat X_t-X_t\). If the learned field is \(L_v\)-Lipschitz in its
state argument, then Gr\"onwall's inequality gives
\begin{equation}
    |e_1|
    \le
    \int_0^1
    e^{L_v(1-t)}
    |v_{\theta_T}(X_t,t)-v^\star(X_t,t)|dt.
\end{equation}
\par\indent Let \(Z_{\theta_T,1}^{\mathrm{exact}}\) denote the terminal law
obtained by exact integration of the learned ODE and
\(Z_{\theta_T,1}^{(L)}\) the law produced by the \(L\)-step numerical solver.
We write
\begin{equation}
    \delta_{\mathrm{ode}}
    =
    \E\!\left[
        W_1\!\left(
            Z_{\theta_T,1}^{(L)},
            Z_{\theta_T,1}^{\mathrm{exact}}
        \right)
    \right]
    \label{eq:app-ode-integration-error}
\end{equation}
for the expected numerical-integration contribution.

\par\indent Cauchy--Schwarz, the common-source coupling, Jensen's inequality,
and the triangle inequality imply
\begin{equation}
    \begin{aligned}
    \E W_1(Z_{\theta_T,1}^{(L)},Z_1^\star)
    &\le C_{L_v}\sqrt{2\E\mathcal F(\theta_T)}
    +\delta_{\mathrm{ode}},\\
    C_{L_v}^2
    &=\int_0^1e^{2L_v(1-t)}dt.
    \end{aligned}
    \label{eq:app-velocity-w1}
\end{equation}
\par\indent Substituting Eq.~\eqref{eq:app-k-sgd-bound} into
Eq.~\eqref{eq:app-velocity-w1} proves
the transport statement in Corollary~\ref{cor:pa-training-transport}. Thus the endpoint-pool contribution is
\(O(K^{-1})\) in the optimization floor and \(O(K^{-1/2})\) in the terminal
Wasserstein bound.

\paragraph{Bellman tracking.}
Let \(\varepsilon_{K,n}\) denote the one-step terminal-distribution error,
including Eq.~\eqref{eq:app-velocity-w1}, numerical integration error, and any
target-law mismatch. Distributional Bellman contraction gives, for fixed
policy,
\begin{equation}
    d_{n+1}
    \le
    \gamma d_n+\varepsilon_{K,n},
    \qquad
    \limsup_n d_n
    \le
    \frac{\limsup_n\varepsilon_{K,n}}{1-\gamma}.
    \label{eq:app-fixed-policy-tracking}
\end{equation}
\par\indent For a changing policy, one additional triangle inequality yields
\begin{equation}
    d_{n+1}
    \le
    \gamma d_n+\varepsilon_{K,n}+\Delta_n^\pi,
    \label{eq:app-policy-drift}
\end{equation}
where \(\Delta_n^\pi\) measures the change between consecutive policy return
laws. This is a tracking result, not a global actor--critic convergence claim.
\subsection{Conservative Endpoint Reweighting}
\label{app:distorted-laws}
\label{app:endpoint-pool-interface-proofs}
\label{app:distorted-contraction}
\label{app:empirical-distorted-consistency}

Let \(g:\mathbb R\to[0,\infty)\) be a fixed distortion with
\(0<\int g(y)\mu_\tau(dy)<\infty\). The distorted endpoint law is
\begin{equation}
    \mu_{\tau,g}(dy)
    =
    \frac{g(y)\mu_\tau(dy)}
         {\int g(\widetilde y)\mu_\tau(d\widetilde y)}.
    \label{eq:app-distorted-endpoint-law}
\end{equation}

After observing \(X_t=x\), its bridge posterior is proportional to
\(\ell_g(x\mid y)\mu_\tau(dy)\), where
\begin{equation}
    \ell_g(x\mid y)=\ell_t(x\mid y)g(y).
    \label{eq:app-distorted-likelihood}
\end{equation}

For \(g\equiv1\), the weights coincide with the posterior anchor
probabilities in Eq.~\eqref{eq:app-anchor-posterior}, giving the exact
finite-pool Rao--Blackwell identity. For nonconstant \(g\), the weights
intentionally define a different target and are not the posterior of the
uniformly selected anchor.

The practical distorted weights and target are
\begin{equation}
    w_j^g
    =
    \frac{\ell_t(x\mid Y_j)g(Y_j)}
         {\sum_{\ell=1}^{K}\ell_t(x\mid Y_\ell)g(Y_\ell)},
    \qquad
    \widehat u_K^g
    =
    \sum_{j=1}^{K}w_j^g u_t(x,Y_j).
    \label{eq:app-distorted-aip-estimator}
\end{equation}

\begin{proposition}[Consistency of the distorted endpoint estimator]
\label{prop:distorted-aip-consistency}
Under the moment and positive-normalizer assumptions of
Theorem~\ref{thm:pa-bcfm-estimator}, condition on the selected anchor and
relabel it as \(Y_0\). With unnormalized weight \(\ell_g\),
\begin{equation}
    \begin{aligned}
    \widehat u_K^g
    -\widetilde u_{K-1}^g
    &=\alpha_{0,K}^{g}
    \bigl(u_0-\widetilde u_{K-1}^g\bigr),
    \\
    \alpha_{0,K}^{g}
    &=\frac{\ell_g(x\mid Y_0)}{\sum_{j=0}^{K-1}\ell_g(x\mid Y_j)}.
    \end{aligned}
    \label{eq:app-distorted-aip-loo}
\end{equation}

Consequently,
\(\widehat u_K^g-\widetilde u_{K-1}^g=O_p(K^{-1})\),
\begin{equation}
    \widehat u_K^g
    \xrightarrow{p}
    v_{\tau,g}^\star(x,t)
    =
    \frac{\E_{\mu_\tau}[\ell_g(x\mid Y)u_t(x,Y)]}
         {\E_{\mu_\tau}[\ell_g(x\mid Y)]},
    \label{eq:app-distorted-population-velocity}
\end{equation}
and the practical estimator has the same first-order SNIS variance as the
leave-one-out ratio. For nonconstant \(g\), its finite-pool ratio is generally
biased by \(O(K^{-1})\); the exact unbiasedness in
Eq.~\eqref{eq:app-finite-k-unbiasedness} is specific to the undistorted
objective.
\end{proposition}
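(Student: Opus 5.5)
The plan is to follow the structure already used for the undistorted estimator and treat the distorted target as a self-normalized importance-sampling (SNIS) ratio with unnormalized weight \(\ell_g(x\mid y)=\ell_t(x\mid y)g(y)\).

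\textbf{Step 1: the leave-one-out identity.} Condition on the selected anchor, relabel it \(Y_0\), and write \(S_{K-1}=\sum_{j=1}^{K-1}\ell_g(x\mid Y_j)\) and \(N_{K-1}=\sum_{j=1}^{K-1}\ell_g(x\mid Y_j)u_j\), so that \(\widetilde u_{K-1}^g=N_{K-1}/S_{K-1}\). Then
\[
\widehat u_K^g=\frac{\ell_g(x\mid Y_0)u_0+N_{K-1}}{\ell_g(x\mid Y_0)+S_{K-1}}.
\]
Subtracting \(\widetilde u_{K-1}^g\) and putting everything over the common denominator gives exactly \(\alpha_{0,K}^g(u_0-\widetilde u_{K-1}^g)\). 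This is the same algebra as in Eq.~\eqref{eq:app-aip-loo-decomposition} with \(\omega_j\) replaced by \(\ell_g\).

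\textbf{Step 2: convergence of the leave-one-out ratio.} The nonanchor \(Y_j\) are i.i.d.\ \(\mu_\tau\). The bridge likelihood \(\ell_t\) is bounded by \(1/((1-t)\sqrt{2\pi})\) for \(t\le1-\delta\), and I assume \(\int g\,d\mu_\tau<\infty\) together with the moment condition on \(\ell_g u\). Under these, the strong law gives
\[
S_{K-1}/(K-1)\to\E_{\mu_\tau}[\ell_g]>0,\qquad N_{K-1}/(K-1)\to\E_{\mu_\tau}[\ell_g u].
\]
The positive-normalizer assumption makes the limit of \(S_{K-1}/(K-1)\) nonzero, so \(\widetilde u_{K-1}^g\to v^\star_{\tau,g}\) almost surely. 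The multivariate CLT followed by the delta method then gives \(\sqrt{K}(\widetilde u_{K-1}^g-v^\star_{\tau,g})\Rightarrow\mathcal N(0,\sigma^2_{\tau,g})\), where \(\sigma^2_{\tau,g}\) is Eq.~\eqref{eq:app-rb-asymptotic-variance} with \(\omega\) replaced by \(\ell_g\).

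\textbf{Step 3: the anchor correction is \(O_p(K^{-1})\).} The anchor \(Y_0\) is drawn from the undistorted posterior \(\nu_{\tau,x}\), not the distorted one. Even so, \(\ell_g(x\mid Y_0)\) and \(u_0\) are \(O_p(1)\), since both are fixed-law random variables that do not depend on \(K\). Together with \(S_{K-1}\ge c(K-1)\) with probability tending to one, this gives \(\alpha_{0,K}^g=O_p(K^{-1})\). Since \(u_0-\widetilde u^g_{K-1}=O_p(1)\), the correction is \(O_p(K^{-1})=o_p(K^{-1/2})\). Slutsky's theorem then transfers both the consistency and the first-order CLT from \(\widetilde u^g_{K-1}\) to \(\widehat u^g_K\).

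\textbf{Step 4: the bias claim, which I expect to be the main obstacle.} For \(g\equiv1\) the finite-\(K\) unbiasedness comes from the Rao--Blackwell identity Eq.~\eqref{eq:app-rb-identity}. For nonconstant \(g\) that identity fails: the anchor is drawn with uniform prior and bridge likelihood \(\ell_t\), so its posterior is \(w_j\), not \(w_j^g\).
\begin{itemize}
\item The bias has two sources. One is the standard SNIS ratio bias of \(\widetilde u^g_{K-1}\), which a second-order delta expansion shows is \(O(K^{-1})\). The other is the anchor term, whose expectation is also \(O(K^{-1})\).
\item Converting these \(O_p\) statements into a bound on the expectation requires uniform integrability. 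I would obtain it from \(\alpha_{0,K}^g\le1\), \(|u_0-\widetilde u^g_{K-1}|\le|u_0|+\max_j|u_j|\), and a tail bound on \(S_{K-1}\) near zero.
\item To show the bias is generally nonzero, I would exhibit a two-point \(\mu_\tau\) with \(g\) nonconstant and compute the \(K=1\) expectation explicitly. At \(K=1\) the estimator is \(u_0\), whose mean is \(v^\star_\tau\neq v^\star_{\tau,g}\).
\end{itemize}
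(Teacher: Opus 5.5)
Your proposal is correct and follows the paper's argument. Both use the same leave-one-out ratio identity with \(\omega_j\) replaced by \(\ell_g\), the strong law and CLT/delta method on the i.i.d.\ nonanchor draws from \(\mu_\tau\), an \(O_p(K^{-1})\) anchor correction transferred by Slutsky, and the observation that the anchor's posterior is \(w_j\) rather than \(w_j^g\), so the tower-property cancellation of the undistorted case is lost. Your Step 4 goes beyond the paper's one-line remark on the bias: the explicit \(K=1\) case, where \(\widehat u_1^g=u_0\) has conditional mean \(v_\tau^\star\neq v_{\tau,g}^\star\), together with your uniform-integrability sketch for the \(O(K^{-1})\) rate, justifies a claim the paper only asserts.
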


\begin{proof}
Equation~\eqref{eq:app-distorted-aip-loo} follows from the same ratio identity
as Eq.~\eqref{eq:app-aip-loo-decomposition}. The nonanchor samples remain
independent draws from \(\mu_\tau\), so the strong law, ratio expansion, and
central limit theorem apply to \(\ell_g\). The anchor contributes one additional
term and changes the leave-one-out ratio by only \(O_p(K^{-1})\). Unlike the
case \(g\equiv1\), no tower-property cancellation makes this distorted ratio
exactly unbiased at finite \(K\).
\end{proof}

\paragraph{Hard truncation.}
For a population truncation level \(\rho\in[0,1)\), let
\(q_{1-\rho}\) be the \((1-\rho)\)-quantile of \(\mu_\tau\) and define
\(g_\rho^{\mathrm{hard}}(y)=\mathbf 1\{y\le q_{1-\rho}\}\).
Replacing the population quantile by the empirical pool quantile is
consistent under continuity of the endpoint CDF at the truncation point. The
soft tilt \(g_\eta(y)=\exp(-\eta y)\) is covered directly by
Proposition~\ref{prop:distorted-aip-consistency}.

\begin{proposition}[Distorted Bellman contraction and pessimistic order]
\label{prop:distorted-bellman-order}
Let \(\mathcal D_g\) be the population distortion and suppose
\begin{equation}
    W_p(\mathcal D_g\mu,\mathcal D_g\nu)
    \le L_gW_p(\mu,\nu).
    \label{eq:app-distortion-lipschitz}
\end{equation}

Then
\begin{equation}
    \bar W_p(\mathcal T_g^\pi Z_1,\mathcal T_g^\pi Z_2)
    \le
    \gamma L_g\bar W_p(Z_1,Z_2).
    \label{eq:app-distorted-bellman-contraction}
\end{equation}

Hence \(\gamma L_g<1\) gives a unique fixed point. If \(g\) is
nonincreasing, then \(\mathcal D_g\mu\preceq_{\mathrm{FSD}}\mu\). If the
Bellman operator is monotone and both distorted and undistorted iterations
converge, their fixed points satisfy
\(Z_g^\pi\preceq_{\mathrm{FSD}}Z^\pi\).
\end{proposition}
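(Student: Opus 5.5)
The proof has three parts. First, the contraction: write the distorted operator as \(\mathcal T_g^\pi=\mathcal D_g\circ\mathcal T^\pi\), applied transition-wise to the Bellman endpoint law. For each \((s,a)\), apply the Lipschitz hypothesis \eqref{eq:app-distortion-lipschitz} pointwise:
\begin{equation}
W_p\bigl((\mathcal T_g^\pi Z_1)(s,a),(\mathcal T_g^\pi Z_2)(s,a)\bigr)\le L_g\,W_p\bigl((\mathcal T^\pi Z_1)(s,a),(\mathcal T^\pi Z_2)(s,a)\bigr).
\end{equation}
Then invoke the \(\gamma\)-contraction of \(\mathcal T^\pi\) in \(\bar W_p\) from Appendix~\ref{app:distributional-operators}. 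The bound on the right becomes \(L_g\gamma\,\bar W_p(Z_1,Z_2)\). Taking the supremum over \((s,a)\) gives \eqref{eq:app-distorted-bellman-contraction}. When \(\gamma L_g<1\), Banach's fixed-point theorem on the complete space of return-distribution functions with bounded \(p\)-th moments (under the maximal Wasserstein metric) yields existence and uniqueness of the fixed point \(Z_g^\pi\). I also need to confirm that \(\mathcal D_g\) preserves finite \(p\)-th moments, so that the operator maps this space into itself. For the soft tilt this requires an exponential-moment assumption, which I will state explicitly.

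Second, the stochastic order for nonincreasing \(g\). Set \(c=\int g\,d\mu\). I must show \(\mathcal D_g\mu((z,\infty))\le\mu((z,\infty))\) for every \(z\). This is the inequality
\begin{equation}
\int \mathbf 1\{y>z\}g(y)\,\mu(dy)\le \mu((z,\infty))\int g\,d\mu,
\end{equation}
which is Chebyshev's association inequality: \(\mathbf 1\{y>z\}\) is nondecreasing and \(g\) is nonincreasing, so \(\mathrm{Cov}_\mu(\mathbf 1\{Y>z\},g(Y))\le0\). One can prove it directly by integrating \((\mathbf 1\{y>z\}-\mathbf 1\{y'>z\})(g(y)-g(y'))\le0\) against \(\mu\otimes\mu\). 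This is the only genuinely non-routine step, and the symmetrization trick handles it cleanly.

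Third, ordering of the fixed points. Monotonicity of \(\mathcal T^\pi\) in FSD holds because \(y\mapsto r+\gamma(1-d)y\) is nondecreasing and mixing over \((S',A')\) preserves FSD. Starting from \(Z_0\), I will show by induction that \(Z_{n}^g\preceq_{\mathrm{FSD}}Z_n\), where \(Z_{n+1}^g=\mathcal D_g\mathcal T^\pi Z_n^g\) and \(Z_{n+1}=\mathcal T^\pi Z_n\). The inductive step chains three inequalities:
\begin{equation}
\mathcal D_g\mathcal T^\pi Z_n^g\preceq\mathcal T^\pi Z_n^g\preceq\mathcal T^\pi Z_n .
\end{equation}
The first uses step two and the second uses monotonicity with the induction hypothesis.

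Finally, pass to the limit. Both iterations converge by hypothesis, and Wasserstein convergence implies weak convergence. FSD is closed under weak limits, because the CDF inequality persists at continuity points and then everywhere by right-continuity. This gives \(Z_g^\pi\preceq_{\mathrm{FSD}}Z^\pi\).
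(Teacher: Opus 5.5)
Your proposal is correct and follows the same route as the paper's proof: the Lipschitz bound on the distortion combined with Bellman affine scaling gives the contraction, the covariance (Chebyshev) inequality gives the FSD order, and Bellman monotonicity carries the order along both iterations; your Banach, moment-preservation, induction, and weak-limit arguments fill in steps the paper leaves implicit. One thing to tidy: the paper distorts each transition-conditional endpoint law \(\mu_\tau\), so for stochastic transitions your \((s,a)\)-level inequality should come from applying the Lipschitz bound for each \(\tau\) and then using that \(W_p^p\) between two mixtures with common weights is at most the average of the componentwise \(W_p^p\), which is a routine step.
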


\begin{proof}
The Wasserstein bound follows from Bellman affine scaling and the assumed
Lipschitz property of \(\mathcal D_g\). For nonincreasing \(g\), the standard
covariance inequality implies first-order stochastic dominance. Bellman
monotonicity then preserves this order along both convergent iterations.
\end{proof}

\subsection{Adaptive Distortion and Reliability Diagnostics}
\label{app:adaptive-distortion-contraction}

\paragraph{Scope of the adaptive schedule.}
Proposition~\ref{prop:distorted-bellman-order} applies to a fixed population
distortion \(g\). The practical adaptive rule instead uses
\(D_{\mathrm{pool}}\), which depends on the training transition, bridge point,
and sampled endpoint pool, and is treated with stop-gradient within each
critic update. Freezing this realized coefficient for differentiation does
not define a common distorted Bellman operator across endpoint laws or
training iterations. We therefore use the adaptive schedule only as a
practical extension and make no operator-level contraction claim for it.
This training-time quantity is distinct from the state--action diagnostic
\(D_{\mathrm{twin}}(s,a)\).

\paragraph{Inverse-flow support probability.}
For a one-dimensional ODE with unique solutions, each target terminal map
\(\epsilon\mapsto\phi_{\bar\theta_c}(\epsilon\mid1,s,a)\) is order
preserving. For \(c\in\{1,2\}\), define
\begin{align}
    \widehat F_c(z\mid s,a)
    &=\Phi\!\left(
        \phi_{\bar\theta_c}^{-1}(z\mid1,s,a)
    \right),
    \label{eq:app-inverse-flow-cdf}\\
    \widehat F_{\mathrm{ens}}(z\mid s,a)
    &=\frac12\sum_{c=1}^{2}\widehat F_c(z\mid s,a).
    \nonumber
\end{align}
\par\indent For the fixed empirical upper-return threshold
\(q_{\mathrm{hi}}^{\mathcal D}\), the reported unsupported mass is
\begin{equation}
    P_{\mathrm{out}}(s,a)
    =1-\widehat F_{\mathrm{ens}}
      (q_{\mathrm{hi}}^{\mathcal D}\mid s,a).
    \label{eq:app-support-tail-probability}
\end{equation}

This is a model-implied tail probability rather than an OOD certificate.

\paragraph{Statistical and dynamical conditions.}
All finite-pool, asymptotic, optimization, and ODE conclusions hold under the
assumptions stated in their respective propositions; none imply global
actor--critic convergence.
\section{Practical Algorithm}
\label{app:practical-algorithm}

This appendix gives the full critic update, the optional endpoint-pool
extensions, and the policy-improvement procedure used in the experiments.

\subsection{PA-BCFM Critic Update}
\label{app:critic-update}

\paragraph{Endpoint construction.}
For each transition \((s,a,r,s',d)\) in a minibatch of size \(B_{\mathrm{mb}}\), write
\(\tau=(r,s',d)\) and draw
\begin{equation}
    A'_j\sim\pi(\cdot\mid s'),
    \qquad
    \epsilon'_j\sim\mathcal N(0,1),
    \qquad j=1,\ldots,K,
\end{equation}
and generate common Bellman endpoints from the ensemble-mean target return
flow,
\begin{equation}
    \begin{aligned}
    Y_j^{(c)}
    &=r+\gamma(1-d)
      \phi_{\bar\theta_c}(\epsilon'_j\mid1,s',A'_j),\\
    Y_j
    &=\frac12\left(Y_j^{(1)}+Y_j^{(2)}\right),
      \qquad c\in\{1,2\}.
    \end{aligned}
    \label{eq:app-endpoint-construction}
\end{equation}

The two target critics share successor actions and return-noise samples. Their
endpoint outputs are averaged into the common endpoint \(Y_j\), and the resulting
PA-BCFM target is used for both critic losses.

Both target flows are evaluated vectorially on the \(K\) shared samples and
reuse the successor-state encoding. We sample
\(I\sim\operatorname{Unif}\{1,\ldots,K\}\), independently of endpoint
values, then draw \(\epsilon\sim\mathcal N(0,1)\) and
\(t\sim\mathcal U[0,1-\epsilon_t]\). The bridge point is
\begin{equation}
    x_t=(1-t)\epsilon+tY_I.
    \label{eq:app-anchor-bridge}
\end{equation}

\paragraph{Posterior target.}
For a distortion \(g\), define the log weight
\begin{equation}
    \Lambda_j
    =
    -\frac{(x_t-tY_j)^2}{2(1-t)^2}
    +\log g(Y_j).
    \label{eq:app-stable-logit}
\end{equation}

The default PA-BCFM objective uses \(g\equiv1\). We normalize with
log-sum-exp,
\begin{equation}
    w_j
    =
    \exp\!\left(
        \Lambda_j-\operatorname{logsumexp}_{\ell=1}^{K}\Lambda_\ell
    \right),
    \label{eq:app-stable-weight}
\end{equation}
and construct the candidate directions and posterior target
\begin{equation}
    u_j=\frac{Y_j-x_t}{1-t},
    \qquad
    \widehat u_K=\sum_{j=1}^{K}w_ju_j.
    \label{eq:app-posterior-target}
\end{equation}

The endpoints, weights, and posterior target are stop-gradient quantities.

\paragraph{Critic and target updates.}
Each member of the twin critic minimizes
\begin{equation}
    \mathcal L_c
    =
    \E\left[
        \left\|
        v_{\theta_c}(x_t,t;s,a)
        -\operatorname{sg}[\widehat u_K]
        \right\|_2^2
    \right],
    \qquad c\in\{1,2\}.
    \label{eq:app-twin-critic-loss}
\end{equation}

The two online critics are parameterized and optimized separately despite
sharing this target; their individual target flows are retained for the
training-time pool signal and direct state--action diagnostics in
Appendix~\ref{app:pool-control}.

After the gradient update, the corresponding target parameters are updated by
Polyak averaging,
\begin{equation}
    \bar\theta_c
    \leftarrow
    \tau_{\mathrm{tgt}}\theta_c
    +(1-\tau_{\mathrm{tgt}})\bar\theta_c.
    \label{eq:app-polyak-update}
\end{equation}

Gradients do not pass through the target flow, posterior construction,
diagnostics, or target update.

\begin{algorithm}[t]
\caption{Posterior-Averaged Bellman Flow Matching (PA-BFM)}
\label{alg:pa-bfm}
\begin{algorithmic}[1]
\REQUIRE replay buffer \(\mathcal D\), twin critics and targets, policy
\(\pi\), behavior proposal \(\pi_\beta\), endpoint count \(K\), optional
endpoint distortion \(g\), policy-training schedule \(\mathcal P\)
\REPEAT
    \STATE Sample \(B_{\mathrm{mb}}\) transitions \((s,a,r,s',d)\) from \(\mathcal D\)
    \STATE Draw \(K\) successor actions and target-flow noises per transition
    \STATE Evaluate both targets and set \(Y_j=(Y_j^{(1)}+Y_j^{(2)})/2\)
    \STATE Sample \(I\) uniformly and construct \(x_t=(1-t)\epsilon+tY_I\)
    \STATE Form stable posterior logits, weights, and posterior target \(\widehat u_K\)
    \STATE Update both critics using Eq.~\eqref{eq:app-twin-critic-loss}
    \STATE Polyak-update both target critics
    \IF{online interaction is active}
        \STATE Collect transitions and append them to \(\mathcal D\)
    \ENDIF
    \IF{one-step policy training is enabled by \(\mathcal P\)}
        \STATE Update the behavior-regularized one-step policy with
        Eq.~\eqref{eq:app-online-policy-loss}
    \ENDIF
\UNTIL{the training budget is exhausted}
\RETURN twin return critics, behavior proposal, and optional one-step policy
\end{algorithmic}
\end{algorithm}

\subsection{Conservative Reweighting and Diagnostics}
\label{app:pool-control}

\paragraph{Fixed endpoint distortions.}
The main PA-BCFM experiments use the undistorted target \(g\equiv1\). The
reported conservative variant uses the fixed soft tilt
\begin{equation}
    g_\eta(Y_j)=\exp(-\eta Y_j),
    \qquad \eta\ge0.
\end{equation}
\par\indent Hard upper-tail truncation is implemented by zeroing candidates
above the empirical pool quantile; it is not used in the reported
experiments. The exact finite-pool Rao--Blackwell identity applies to
\(g\equiv1\); nonconstant \(g\) deliberately defines a distorted target.

\paragraph{Training-time pool disagreement.}
To separate critic disagreement from candidate-sampling noise, both target
critics are evaluated on the same successor actions and return noises:
\begin{equation}
    Y_j^{(c)}
    =
    r+\gamma(1-d)
    \phi_{\bar\theta_c}(\epsilon'_j\mid1,s',A'_j),
    \qquad c\in\{1,2\}.
    \label{eq:app-twin-endpoints}
\end{equation}

Let \(Y_j=(Y_j^{(1)}+Y_j^{(2)})/2\), and compute the undistorted
posterior weights \(w_j^0\) from \(Y_j\). The disagreement score is
\begin{equation}
    D_{\mathrm{pool}}(\tau,x_t,t,\mathbf Y)
    =
    \sum_{j=1}^{K}w_j^0
    \left(Y_j^{(1)}-Y_j^{(2)}\right)^2.
    \label{eq:app-twin-disagreement}
\end{equation}

\paragraph{Adaptive distortion.}
For the reported diagnostic experiment, return-valued quantities are scaled
by the task-specific
\(\Delta_e=q_{.95}^{\mathcal D,e}-q_{.05}^{\mathcal D,e}\)
return-to-go range, and normalized disagreement is clipped to \([0,1]\).
Training-time pool disagreement modulates the soft tilt as
\begin{equation}
    \begin{aligned}
    \eta(\tau,x_t,t)
    &=\eta_0\operatorname{sg}\!\left[
        \operatorname{clip}\!\left(
            \frac{\sqrt{D_{\mathrm{pool}}(\tau,x_t,t,\mathbf Y)}}{\Delta_e},
            0,1
        \right)
    \right],
    \\
    g_{\tau,x_t,t}(Y_j)
    &=\exp\!\left[
        -\eta(\tau,x_t,t)\frac{Y_j}{\Delta_e}
    \right].
    \end{aligned}
    \label{eq:app-adaptive-distortion}
\end{equation}

The stop-gradient makes the diagnostic a target-construction signal rather
than a second gradient path through the critic. This sample-dependent rule is
not covered by the fixed-distortion contraction result.

\paragraph{State--action reliability diagnostics.}
At a queried \((s,a)\), shared Gaussian samples isolate disagreement between
the twin target return flows:
\begin{equation}
    \begin{aligned}
    Z_m^{(c)}(s,a)
    &=\phi_{\bar\theta_c}(\xi_m\mid1,s,a),
      \qquad c\in\{1,2\},\\
    \xi_m&\sim\mathcal N(0,1),
      \qquad m=1,\ldots,M_{\mathrm{diag}},\\
    D_{\mathrm{twin}}(s,a)
    &=\frac1{M_{\mathrm{diag}}}
      \sum_{m=1}^{M_{\mathrm{diag}}}
      \bigl(Z_m^{(1)}(s,a)-Z_m^{(2)}(s,a)\bigr)^2.
    \end{aligned}
    \label{eq:app-state-action-disagreement}
\end{equation}
\par\indent For each one-dimensional return flow, inversion gives the critic CDF; the
reported unsupported mass uses their ensemble average:
\begin{align}
    \widehat F_c(b\mid s,a)
    &=\Phi\!\left(
        \phi_{\bar\theta_c}^{-1}(b\mid1,s,a)
    \right),
    \label{eq:app-support-score}\\
    \widehat F_{\mathrm{ens}}(b\mid s,a)
    &=\frac12\sum_{c=1}^{2}\widehat F_c(b\mid s,a),
    \nonumber\\
    P_{\mathrm{out}}(s,a)
    &=1-\widehat F_{\mathrm{ens}}
      (q_{\mathrm{hi}}^{\mathcal D}\mid s,a).
    \nonumber
\end{align}
\par\indent The threshold \(q_{\mathrm{hi}}^{\mathcal D}\) is a
dataset-derived upper-return threshold selected once before diagnostic
evaluation and then held fixed.

\subsection{Policy Extraction and Shared Policy Training}
\label{app:policy-extraction}

\par\indent All main experiments rank candidate actions by predicted mean
return. Where policy training is enabled by \(\mathcal P\), actions are
distilled into a one-step policy using the shared behavior-regularized
objective below. This policy-side procedure is matched across PA-BFM, Value
Flows, and PCBF.

\paragraph{Distributional action evaluation.}
For a state \(s\), the behavior flow proposes \(N\) actions
\(a_i\sim\pi_\beta(\cdot\mid s)\). Each action is evaluated with
\(M_{\mathrm{act}}\) shared return noises under the twin return flows:
\begin{equation}
    \begin{aligned}
    z_{im}^{(c)}
    &=\phi_{\theta_c}(\epsilon_{im}\mid1,s,a_i),\\
    \epsilon_{im}&\sim\mathcal N(0,1),\qquad
    \substack{m=1,\ldots,M_{\mathrm{act}},\\c\in\{1,2\}}.
    \end{aligned}
    \label{eq:app-policy-return-samples}
\end{equation}

The mean score and selected action are
\begin{equation}
    \begin{aligned}
    \widehat\mu_i
    &=\frac{1}{2M_{\mathrm{act}}}
      \sum_{c=1}^{2}\sum_{m=1}^{M_{\mathrm{act}}}z_{im}^{(c)},\\
    a^\star
    &=\operatorname*{arg\,max}_{1\le i\le N}\widehat\mu_i.
    \end{aligned}
    \label{eq:app-action-selection}
\end{equation}

\par\indent With the default \(M_{\mathrm{act}}=1\) in
Table~\ref{tab:common-hyperparameters}, \(\widehat\mu_i\) is a single-sample
Monte Carlo estimate of the critic mean.

\paragraph{Shared one-step policy update.}
When policy training is enabled by \(\mathcal P\), a one-step policy
\(\pi_\psi\), with parameters \(\psi\), is trained against the frozen critic
with behavior regularization:
\begin{equation}
    \begin{aligned}
    a_\psi&=\pi_\psi(s,\zeta),\\
    \mathcal L_{\mathrm{policy}}(\psi)
    &=\E_{\substack{s\sim\mathcal D\\\zeta\sim\mathcal N(0,\mathbf I_{d_a})}}\Bigl[
        -\widehat\mu(s,a_\psi)\\[-2pt]
    &\qquad
        +\alpha_{\mathrm{BC}}
        \left\|
            a_\psi-\pi_\beta(s,\zeta)
        \right\|_2^2
    \Bigr].
    \end{aligned}
    \label{eq:app-online-policy-loss}
\end{equation}

Here \(\widehat\mu(s,a)\) is the twin-flow sample mean above evaluated at
action \(a\), \(\zeta\) is action-generation noise,
\(\mathbf I_{d_a}\) is the identity covariance in action dimension, and
\(\alpha_{\mathrm{BC}}\ge0\) controls behavior regularization. Critic
parameters are fixed during this actor update.

\subsection{Implementation and Complexity}
\label{app:complete-algorithm}

\paragraph{Numerical implementation.}
We restrict flow time to \(t\le1-\epsilon_t\), compute posterior weights with
log-sum-exp, and reuse the successor-state representation across all \(K\)
endpoints. Target-flow solves are vectorized and run without gradient storage.
The same ODE solver and precision policy are used across estimator variants.

\paragraph{Complexity.}
With \(T_{\mathrm{flow}}\) target-flow solver steps, endpoint construction
costs \(O(B_{\mathrm{mb}}KT_{\mathrm{flow}})\) network work and
\(O(B_{\mathrm{mb}}K)\) posterior arithmetic and storage. Policy extraction
requires \(O(NM_{\mathrm{act}}T_{\mathrm{flow}})\) return-flow evaluations per
state. End-to-end wall time is reported separately from these asymptotic
counts.
\section{Additional Experiments}
\label{app:additional-experiments}

\subsection{Controlled Estimator Studies}
\label{appendix:controlled-experiments}

\paragraph{Controlled processes and exact references.}
We use a one-state Bernoulli-return process to fix the controlled-study
configuration and evaluate the frozen configuration on Solitaire Dice and a
discrete Monte Carlo chain. All three processes admit exact return laws, so
the finite-pool posterior target can be evaluated without a learned target
critic. Table~\ref{tab:controlled-setup} summarizes the process definitions.

\begin{table}[t]
    \centering
    \small
    \setlength{\tabcolsep}{3pt}
    \begin{tabular}{lp{0.62\columnwidth}}
        \toprule
        Process & Configuration \\
        \midrule
        Bernoulli return
        & One-state continuing MRP;
        \(R\sim\operatorname{Bernoulli}(0.5)\);
        \(\gamma=0.5\);
        exact return law \(Z\sim\operatorname{Unif}(0,2)\). \\

        Solitaire Dice
        & A fair six-sided die is repeatedly rolled.
        Rolls \(2\)--\(6\) produce reward \(1\) and continue;
        roll \(1\) produces reward \(0\) and terminates;
        \(\gamma=0.9\). \\

        Discrete-MC
        & Twenty-state nearest-neighbor chain with absorbing terminal states
        \(0\) and \(19\). States \(1\)--\(18\) use the fixed
        cosine--exponential nearest-neighbor transition weights of the
        controlled-process configuration. Transitions to nonterminal states
        produce reward \(1\), transitions into a terminal state produce
        reward \(0\), and \(\gamma=0.95\). \\
        \bottomrule
    \end{tabular}
    \caption{Controlled-process settings.
    The Bernoulli process is used to fix the controlled-study configuration;
    Solitaire Dice and Discrete-MC are used for the reported evaluation.}
    \label{tab:controlled-setup}
\end{table}

For a controlled conditional endpoint law \(\mu\), the exact posterior
Bellman velocity at bridge point \(x\) and flow time \(t\) is
\begin{equation}
    v^\star(x,t)
    =
    \frac{
        \mathbb{E}_{Y\sim\mu}
        \left[
            \ell_t(x\mid Y)u_t(x,Y)
        \right]
    }{
        \mathbb{E}_{Y\sim\mu}
        \left[\ell_t(x\mid Y)\right]
    },
    \label{eq:controlled-oracle}
\end{equation}
where \(\ell_t(x\mid Y)\) is the Gaussian bridge likelihood. The expectations
are evaluated from the exact endpoint laws. For each reported process, the
oracle study uses \(N_{\mathrm q}=4096\) bridge queries and \(N_{\mathrm{rep}}=512\) endpoint-pool
replicates per query. For query \(q\), we sample and fix
\((\tau_q,t_q,x_q)\) from the exact bridge marginal and compute the oracle
velocity \(v_q^\star\). In every replicate, the relabeled anchor is resampled
from the exact bridge posterior
\begin{equation}
    Y_{\mathrm{anc},q}^{(r)}
    \sim \nu_{\tau_q,x_q}(dy)
    \propto \ell_{t_q}(x_q\mid y)\,\mu_{\tau_q}(dy).
    \label{eq:controlled-posterior-anchor}
\end{equation}
\par\indent The remaining \(K-1\) endpoints are sampled independently from
\(\mu_{\tau_q}\). This is the conditional law of the uniform-anchor estimator
given the fixed bridge query, so the \(K=1\) target remains random and provides
a nonzero BCFM reference. Uniform aggregation and PA-BCFM use matched candidate
pools within each comparison. Process-level results give equal weight to
Solitaire Dice and Discrete-MC. Confidence intervals use a process-stratified
query-cluster bootstrap, retaining all replicates of each resampled query
together.

\paragraph{Fixed-query conditional target estimation.}
We evaluate the anchor-in-pool estimator with
\(K\in\{1,2,4,8,16\}\). For query \(q\) and endpoint-pool replicate \(r\), let
\(\widehat u_{K,q}^{(r)}\) denote the finite-pool target. We report the
conditional mean-squared error
\begin{equation}
    \operatorname{MSE}_K
    =
    \frac{1}{N_{\mathrm q}N_{\mathrm{rep}}}
    \sum_{q=1}^{N_{\mathrm q}}
    \sum_{r=1}^{N_{\mathrm{rep}}}
    \left(
        \widehat u_{K,q}^{(r)}
        -
        v^\star_q
    \right)^2.
    \label{eq:controlled-mse}
\end{equation}

\begin{table}[t]
    \centering
    \small
    \setlength{\tabcolsep}{5pt}
    \begin{tabular}{rccc}
        \toprule
        \(K\) & Signed bias [95\% CI] & Cond. variance & MSE [95\% CI] \\
        \midrule
        1  & \(0.004\,[-0.006,0.014]\)  & 5.0407 & \(5.0447\,[4.86,5.23]\) \\
        2  & \(-0.003\,[-0.011,0.005]\) & 3.0301 & \(3.0331\,[2.91,3.16]\) \\
        4  & \(0.002\,[-0.005,0.009]\)  & 1.7790 & \(1.7810\,[1.70,1.86]\) \\
        8  & \(-0.001\,[-0.006,0.004]\) & 1.0209 & \(1.0219\,[0.97,1.08]\) \\
        16 & \(0.001\,[-0.003,0.005]\)  & 0.5693 & \(0.5703\,[0.54,0.61]\) \\
        \bottomrule
    \end{tabular}
    \caption{Fixed-query conditional target estimation.
    Bias is the mean signed error; variance is the mean within-query conditional
    variance. Brackets give 95\% process-stratified query-cluster bootstrap
    confidence intervals. Lower MSE is better.}
    \label{tab:controlled-estimator-mse}

    \setlength{\tabcolsep}{4pt}
    \begin{tabular}{llccccc}
        \toprule
        Process & Estimator & \(K=1\) & \(K=2\) & \(K=4\) & \(K=8\) & \(K=16\) \\
        \midrule
        Solitaire
            & Uniform & \(\substack{1.000\\{[1.000,1.000]}}\)
            & \(\substack{.499\\{[.470,.529]}}\)
            & \(\substack{.227\\{[.208,.247]}}\)
            & \(\substack{.114\\{[.102,.126]}}\)
            & \(\substack{.058\\{[.051,.065]}}\) \\
        Solitaire
            & PA-BCFM & \(\substack{1.000\\{[1.000,1.000]}}\)
            & \(\substack{\mathbf{.438}\\{[.409,.468]}}\)
            & \(\substack{\mathbf{.179}\\{[.164,.195]}}\)
            & \(\substack{\mathbf{.081}\\{[.073,.090]}}\)
            & \(\substack{\mathbf{.038}\\{[.034,.043]}}\) \\
        Discrete-MC
            & Uniform & \(\substack{1.000\\{[1.000,1.000]}}\)
            & \(\substack{.460\\{[.432,.489]}}\)
            & \(\substack{.195\\{[.179,.212]}}\)
            & \(\substack{.098\\{[.089,.108]}}\)
            & \(\substack{.052\\{[.046,.058]}}\) \\
        Discrete-MC
            & PA-BCFM & \(\substack{1.000\\{[1.000,1.000]}}\)
            & \(\substack{\mathbf{.349}\\{[.322,.377]}}\)
            & \(\substack{\mathbf{.090}\\{[.080,.100]}}\)
            & \(\substack{\mathbf{.020}\\{[.017,.023]}}\)
            & \(\substack{\mathbf{.010}\\{[.0088,.0122]}}\) \\
        \bottomrule
    \end{tabular}
    \caption{Target-induced gradient trace-covariance ratio.
    Values are normalized by the corresponding random \(K=1\) BCFM reference;
    brackets give 95\% query-cluster bootstrap confidence intervals. Lower is
    better.}
    \label{tab:controlled-gradient-var}
\end{table}

\paragraph{Gradient variability and posterior concentration.}
For the gradient study, critic parameters and the bridge query
\((\tau,t,x_t)\) are fixed, while every replicate independently resamples the
posterior anchor and the remaining endpoint pool using the same conditional
protocol as above. Hence the \(K=1\) target and gradient remain random. For
gradient replicate \(g_r\), we compute the trace covariance
\begin{equation}
    \operatorname{GVar}
    =
    \operatorname{tr}
    \left(
        \widehat{\operatorname{Cov}}[g]
    \right)
    =
    \frac{1}{R_g}
    \sum_{r=1}^{R_g}
    \left\|
        g_r-\bar g
    \right\|_2^2.
    \label{eq:controlled-gradient-variance}
\end{equation}

For each query, we compute the trace covariance across \(R_g=256\) replicates.
The reported process-level ratio divides the aggregate query-level covariance
trace at pool size \(K\) by the corresponding \(K=1\) aggregate. These ratios
measure fixed-query, target-induced gradient variability rather than the
unconditional variance of full SGD updates. Confidence intervals use the same
query-cluster bootstrap.

For PA-BCFM, posterior concentration is summarized by the normalized
effective sample size
\begin{equation}
    \frac{\ESS(\mathbf w)}{K}
    =
    \frac{1}{
        K\sum_{j=1}^{K}w_j^2
    }.
    \label{eq:normalized-ess}
\end{equation}

ESS is used as a within-estimator measure of posterior concentration;
cross-estimator variance is assessed directly by
Table~\ref{tab:controlled-gradient-var}.

\paragraph{Learned return-law fidelity.}
Finally, we train BCFM, uniform endpoint aggregation, Value Flows, PCBF, and
PA-BCFM using the same critic architecture, optimizer, transition batches,
and \(50{,}000\)-update budget. Each configuration is trained with eight
seeds, and return-law fidelity is evaluated using \(50{,}000\) return samples
per evaluated state--action pair. Solitaire Dice is evaluated before the
first roll. For Discrete-MC, the process-level fidelity value aggregates the
configured statewise evaluation grid.

\begin{table}[ht]
    \centering
    \small
    \setlength{\tabcolsep}{2pt}
    \renewcommand{\arraystretch}{1.05}
    \begin{tabular}{lccc}
        \toprule
        Method & Solitaire & Discrete-MC & Avg. \\
        \midrule
        BCFM (\(K=1\))
            & \(\substack{6.644\\{[6.31,6.98]}}\)
            & \(\substack{1.454\\{[1.36,1.55]}}\)
            & \(\substack{4.049\\{[3.86,4.24]}}\) \\
        Uniform (\(K=8\))
            & \(\substack{6.252\\{[5.92,6.58]}}\)
            & \(\substack{3.630\\{[3.38,3.88]}}\)
            & \(\substack{4.941\\{[4.73,5.15]}}\) \\
        Value Flows
            & \(\substack{6.650\\{[6.25,7.05]}}\)
            & \(\substack{4.202\\{[3.91,4.49]}}\)
            & \(\substack{5.426\\{[5.18,5.67]}}\) \\
        PCBF (\(\lambda_{\mathrm{cv}}=0.3\))
            & \(\substack{6.020\\{[5.76,6.28]}}\)
            & \(\substack{1.227\\{[1.15,1.30]}}\)
            & \(\substack{3.623\\{[3.48,3.77]}}\) \\
        PA-BCFM (\(K=8\))
            & \(\substack{\mathbf{5.891}\\{[5.65,6.13]}}\)
            & \(\substack{\mathbf{1.185}\\{[1.12,1.25]}}\)
            & \(\substack{\mathbf{3.538}\\{[3.41,3.67]}}\) \\
        \bottomrule
    \end{tabular}
    \caption{Learned return-law fidelity.
    Entries are seed-level means with 95\% confidence intervals. Avg. is the
    equal-weight average over Solitaire Dice and Discrete-MC. Lower
    \(1\)-Wasserstein distance is better.}
    \label{tab:controlled-fidelity}
\end{table}

\FloatBarrier
\subsection{Full Offline OGBench Results}
\label{app:full-offline-results}

Tables~\ref{tab:offline-eval-state-full} and
\ref{tab:offline-eval-visual-full} report the complete task-level results
underlying Table~\ref{tab:offline-eval-agg}. Seed counts and the
domain-level tuning-task convention follow Appendix~\ref{app:evaluation-protocol}.

\begin{table}[p]
\begin{minipage}{\textwidth}
    \centering
    \small
    \setlength{\tabcolsep}{1pt}
    \begin{tabular}{lccccccccccc}
        \toprule
        & \multicolumn{3}{c}{Gaussian Policies}
        & \multicolumn{2}{c}{Flow Policies}
        & \multicolumn{6}{c}{Distributional Critics} \\
        \cmidrule(lr){2-4}
        \cmidrule(lr){5-6}
        \cmidrule(lr){7-12}
        Task
        & BC & IQL & ReBRAC
        & IFQL & FQL & C51 & IQN & CODAC & Value Flows & PCBF & \ours{} \\
        \midrule
        \multicolumn{12}{l}{\textit{\texttt{cube-double-play}}} \\
        \texttt{task1}
        & $8 \pm 3$
        & $27 \pm 5$
        & $45 \pm 6$
        & $35 \pm 9$
        & $61 \pm 9$
        & $9 \pm 0$
        & $70 \pm 14$
        & $80 \pm 11$
        & $97 \pm 1$
        & $92 \pm 3$
        & $\mathbf{99 \pm 1}$ \\

        \texttt{task2}\textsuperscript{*}
        & $0 \pm 0$
        & $1 \pm 1$
        & $7 \pm 3$
        & $9 \pm 5$
        & $36 \pm 6$
        & $0 \pm 0$
        & $24 \pm 9$
        & $63 \pm 4$
        & $76 \pm 7$
        & $74 \pm 7$
        & $\mathbf{89 \pm 2}$ \\

        \texttt{task3}
        & $0 \pm 0$
        & $0 \pm 0$
        & $4 \pm 1$
        & $8 \pm 5$
        & $22 \pm 5$
        & $0 \pm 0$
        & $25 \pm 6$
        & $66 \pm 9$
        & $73 \pm 4$
        & $81 \pm 8$
        & $\mathbf{91 \pm 2}$ \\

        \texttt{task4}
        & $0 \pm 0$
        & $0 \pm 0$
        & $1 \pm 1$
        & $1 \pm 1$
        & $5 \pm 2$
        & $0 \pm 0$
        & $10 \pm 1$
        & $13 \pm 2$
        & $30 \pm 5$
        & $22 \pm 5$
        & $\mathbf{44 \pm 7}$ \\

        \texttt{task5}
        & $0 \pm 0$
        & $4 \pm 3$
        & $4 \pm 2$
        & $17 \pm 6$
        & $19 \pm 10$
        & $0 \pm 0$
        & $81 \pm 8$
        & $82 \pm 4$
        & $69 \pm 5$
        & $\mathbf{84 \pm 3}$
        & $70 \pm 6$ \\

        \midrule
        \multicolumn{12}{l}{\textit{\texttt{cube-triple-play}}} \\
        \texttt{task1}\textsuperscript{*}
        & $1 \pm 1$
        & $4 \pm 4$
        & $1 \pm 2$
        & $2 \pm 2$
        & $20 \pm 6$
        & $1 \pm 0$
        & $29 \pm 2$
        & $9 \pm 5$
        & $59 \pm 12$
        & $18 \pm 4$
        & $\mathbf{60 \pm 8}$ \\

        \texttt{task2}
        & $0 \pm 0$
        & $0 \pm 0$
        & $0 \pm 0$
        & $0 \pm 0$
        & $\mathbf{1 \pm 2}$
        & $0 \pm 0$
        & $0 \pm 0$
        & $\mathbf{1 \pm 0}$
        & $0 \pm 0$
        & $0 \pm 1$
        & $0 \pm 0$ \\

        \texttt{task3}
        & $0 \pm 0$
        & $0 \pm 0$
        & $0 \pm 0$
        & $0 \pm 0$
        & $0 \pm 0$
        & $0 \pm 0$
        & $1 \pm 0$
        & $0 \pm 0$
        & $\mathbf{7 \pm 3}$
        & $1 \pm 1$
        & $\mathbf{7 \pm 2}$ \\

        \texttt{task4}
        & $0 \pm 0$
        & $0 \pm 0$
        & $0 \pm 0$
        & $0 \pm 0$
        & $0 \pm 0$
        & $0 \pm 0$
        & $0 \pm 0$
        & $0 \pm 0$
        & $0 \pm 0$
        & $0 \pm 0$
        & $0 \pm 0$ \\

        \texttt{task5}
        & $0 \pm 0$
        & $1 \pm 1$
        & $0 \pm 0$
        & $0 \pm 0$
        & $0 \pm 0$
        & $0 \pm 0$
        & $0 \pm 0$
        & $0 \pm 0$
        & $\mathbf{2 \pm 1}$
        & $1 \pm 1$
        & $1 \pm 1$ \\

        \midrule
        \multicolumn{12}{l}{\textit{\texttt{puzzle-3x3-play}}} \\
        \texttt{task1}
        & $5 \pm 2$
        & $33 \pm 6$
        & $97 \pm 4$
        & $94 \pm 3$
        & $90 \pm 4$
        & $5 \pm 0$
        & $71 \pm 3$
        & $78 \pm 8$
        & $99 \pm 0$
        & $73 \pm 2$
        & $\mathbf{100 \pm 0}$ \\

        \texttt{task2}
        & $1 \pm 1$
        & $4 \pm 3$
        & $1 \pm 1$
        & $1 \pm 2$
        & $16 \pm 5$
        & $0 \pm 0$
        & $2 \pm 2$
        & $5 \pm 2$
        & $98 \pm 2$
        & $23 \pm 5$
        & $\mathbf{100 \pm 0}$ \\

        \texttt{task3}
        & $1 \pm 1$
        & $3 \pm 2$
        & $3 \pm 1$
        & $0 \pm 0$
        & $10 \pm 3$
        & $0 \pm 0$
        & $0 \pm 0$
        & $4 \pm 3$
        & $\mathbf{97 \pm 1}$
        & $83 \pm 1$
        & $86 \pm 2$ \\

        \texttt{task4}\textsuperscript{*}
        & $1 \pm 1$
        & $2 \pm 1$
        & $2 \pm 1$
        & $0 \pm 0$
        & $16 \pm 5$
        & $0 \pm 0$
        & $0 \pm 0$
        & $5 \pm 5$
        & $84 \pm 24$
        & $42 \pm 5$
        & $\mathbf{94 \pm 4}$ \\

        \texttt{task5}
        & $1 \pm 0$
        & $3 \pm 2$
        & $5 \pm 3$
        & $0 \pm 0$
        & $16 \pm 3$
        & $0 \pm 0$
        & $0 \pm 0$
        & $6 \pm 5$
        & $58 \pm 39$
        & $43 \pm 4$
        & $\mathbf{87 \pm 8}$ \\

        \midrule
        \multicolumn{12}{l}{\textit{\texttt{puzzle-4x4-play}}} \\
        \texttt{task1}
        & $1 \pm 1$
        & $12 \pm 2$
        & $26 \pm 4$
        & $49 \pm 9$
        & $34 \pm 8$
        & $0 \pm 0$
        & $41 \pm 2$
        & $37 \pm 32$
        & $36 \pm 3$
        & $38 \pm 6$
        & $\mathbf{73 \pm 8}$ \\

        \texttt{task2}
        & $0 \pm 0$
        & $7 \pm 4$
        & $12 \pm 4$
        & $4 \pm 4$
        & $16 \pm 5$
        & $0 \pm 0$
        & $12 \pm 4$
        & $10 \pm 10$
        & $27 \pm 5$
        & $23 \pm 5$
        & $\mathbf{51 \pm 2}$ \\

        \texttt{task3}
        & $0 \pm 0$
        & $9 \pm 3$
        & $15 \pm 3$
        & $50 \pm 14$
        & $18 \pm 5$
        & $0 \pm 0$
        & $45 \pm 7$
        & $33 \pm 29$
        & $30 \pm 4$
        & $40 \pm 4$
        & $\mathbf{68 \pm 5}$ \\

        \texttt{task4}\textsuperscript{*}
        & $0 \pm 0$
        & $5 \pm 2$
        & $10 \pm 3$
        & $21 \pm 11$
        & $11 \pm 3$
        & $0 \pm 0$
        & $23 \pm 2$
        & $12 \pm 10$
        & $28 \pm 5$
        & $28 \pm 4$
        & $\mathbf{49 \pm 10}$ \\

        \texttt{task5}
        & $0 \pm 0$
        & $4 \pm 1$
        & $7 \pm 3$
        & $2 \pm 2$
        & $7 \pm 3$
        & $0 \pm 0$
        & $16 \pm 6$
        & $10 \pm 8$
        & $13 \pm 2$
        & $\mathbf{23 \pm 3}$
        & $20 \pm 6$ \\

        \midrule
        \multicolumn{12}{l}{\textit{\texttt{scene-play}}} \\
        \texttt{task1}
        & $19 \pm 6$
        & $94 \pm 3$
        & $95 \pm 2$
        & $98 \pm 3$
        & $\mathbf{100 \pm 0}$
        & $17 \pm 3$
        & $\mathbf{100 \pm 0}$
        & $99 \pm 0$
        & $99 \pm 0$
        & $\mathbf{100 \pm 0}$
        & $\mathbf{100 \pm 0}$ \\

        \texttt{task2}\textsuperscript{*}
        & $1 \pm 1$
        & $12 \pm 3$
        & $50 \pm 13$
        & $0 \pm 0$
        & $76 \pm 9$
        & $2 \pm 1$
        & $1 \pm 0$
        & $85 \pm 4$
        & $97 \pm 1$
        & $57 \pm 13$
        & $\mathbf{100 \pm 0}$ \\

        \texttt{task3}
        & $1 \pm 1$
        & $32 \pm 7$
        & $55 \pm 16$
        & $54 \pm 19$
        & $98 \pm 1$
        & $0 \pm 1$
        & $94 \pm 2$
        & $90 \pm 3$
        & $94 \pm 2$
        & $98 \pm 2$
        & $\mathbf{100 \pm 0}$ \\

        \texttt{task4}
        & $2 \pm 2$
        & $0 \pm 1$
        & $3 \pm 3$
        & $0 \pm 0$
        & $5 \pm 1$
        & $2 \pm 1$
        & $3 \pm 1$
        & $0 \pm 0$
        & $7 \pm 17$
        & $12 \pm 3$
        & $\mathbf{92 \pm 8}$ \\

        \texttt{task5}
        & $0 \pm 0$
        & $0 \pm 0$
        & $0 \pm 0$
        & $0 \pm 0$
        & $0 \pm 0$
        & $0 \pm 0$
        & $0 \pm 0$
        & $0 \pm 0$
        & $0 \pm 0$
        & $\mathbf{2 \pm 1}$
        & $0 \pm 0$ \\

        \bottomrule
    \end{tabular}
    \captionof{table}{Full offline evaluation on state-based OGBench tasks.
    Results cover 25 tasks and eight training seeds. An asterisk marks the
    domain-level tuning task. Bold denotes the highest displayed task mean;
    ties at the highest mean are all bold, while fully tied rows are left
    unbolded.}
    \label{tab:offline-eval-state-full}
\end{minipage}
\end{table}

\begin{table}[p]
\begin{minipage}{\textwidth}
    \centering
    \small
    \setlength{\tabcolsep}{2pt}
    \renewcommand{\arraystretch}{0.98}
    \begin{tabular}{lcccccccc}
        \toprule
        & \multicolumn{2}{c}{Gaussian Policies}
        & \multicolumn{2}{c}{Flow Policies}
        & \multicolumn{4}{c}{Distributional Critics} \\
        \cmidrule(lr){2-3}
        \cmidrule(lr){4-5}
        \cmidrule(lr){6-9}
        Task
        & IQL & ReBRAC
        & IFQL & FQL & IQN & Value Flows & PCBF & \ours{} \\
        \midrule
        \multicolumn{9}{l}{\textit{\texttt{visual-antmaze-medium-navigate}}} \\
        \texttt{task1}\textsuperscript{*}
        & $78 \pm 9$
        & $54 \pm 15$
        & $\mathbf{81 \pm 3}$
        & $32 \pm 3$
        & $62 \pm 7$
        & $77 \pm 4$
        & $65 \pm 3$
        & $78 \pm 7$ \\

        \texttt{task2}
        & $90 \pm 3$
        & $\mathbf{96 \pm 1}$
        & $87 \pm 1$
        & $60 \pm 2$
        & $88 \pm 2$
        & $75 \pm 5$
        & $73 \pm 1$
        & $92 \pm 2$ \\

        \texttt{task3}
        & $80 \pm 6$
        & $97 \pm 1$
        & $92 \pm 1$
        & $35 \pm 8$
        & $64 \pm 5$
        & $81 \pm 7$
        & $86 \pm 4$
        & $\mathbf{98 \pm 2}$ \\

        \texttt{task4}
        & $89 \pm 4$
        & $93 \pm 2$
        & $84 \pm 3$
        & $35 \pm 2$
        & $71 \pm 6$
        & $71 \pm 6$
        & $78 \pm 3$
        & $\mathbf{96 \pm 3}$ \\

        \texttt{task5}
        & $84 \pm 2$
        & $\mathbf{97 \pm 1}$
        & $89 \pm 2$
        & $29 \pm 7$
        & $84 \pm 1$
        & $70 \pm 26$
        & $71 \pm 1$
        & $93 \pm 1$ \\

        \midrule
        \multicolumn{9}{l}{\textit{\texttt{visual-antmaze-teleport-navigate}}} \\
        \texttt{task1}\textsuperscript{*}
        & $5 \pm 2$
        & $2 \pm 0$
        & $7 \pm 4$
        & $2 \pm 1$
        & $2 \pm 1$
        & $10 \pm 4$
        & $8 \pm 2$
        & $\mathbf{21 \pm 9}$ \\

        \texttt{task2}
        & $10 \pm 2$
        & $10 \pm 3$
        & $13 \pm 3$
        & $6 \pm 1$
        & $7 \pm 3$
        & $17 \pm 5$
        & $19 \pm 4$
        & $\mathbf{33 \pm 8}$ \\

        \texttt{task3}
        & $7 \pm 7$
        & $4 \pm 1$
        & $8 \pm 9$
        & $9 \pm 4$
        & $6 \pm 4$
        & $16 \pm 3$
        & $\mathbf{18 \pm 4}$
        & $16 \pm 2$ \\

        \texttt{task4}
        & $4 \pm 6$
        & $4 \pm 0$
        & $18 \pm 2$
        & $9 \pm 1$
        & $4 \pm 2$
        & $16 \pm 5$
        & $18 \pm 5$
        & $\mathbf{32 \pm 3}$ \\

        \texttt{task5}
        & $2 \pm 1$
        & $2 \pm 1$
        & $4 \pm 2$
        & $1 \pm 1$
        & $2 \pm 1$
        & $8 \pm 2$
        & $8 \pm 3$
        & $\mathbf{10 \pm 2}$ \\

        \midrule
        \multicolumn{9}{l}{\textit{\texttt{visual-cube-double-play}}} \\
        \texttt{task1}\textsuperscript{*}
        & $34 \pm 23$
        & $4 \pm 4$
        & $8 \pm 6$
        & $23 \pm 4$
        & $4 \pm 1$
        & $\mathbf{35 \pm 2}$
        & $10 \pm 3$
        & $30 \pm 10$ \\

        \texttt{task2}
        & $3 \pm 1$
        & $0 \pm 0$
        & $0 \pm 0$
        & $0 \pm 0$
        & $0 \pm 0$
        & $\mathbf{4 \pm 2}$
        & $0 \pm 0$
        & $0 \pm 0$ \\

        \texttt{task3}
        & $7 \pm 4$
        & $2 \pm 2$
        & $1 \pm 1$
        & $4 \pm 2$
        & $0 \pm 0$
        & $\mathbf{11 \pm 2}$
        & $0 \pm 0$
        & $10 \pm 4$ \\

        \texttt{task4}
        & $2 \pm 1$
        & $0 \pm 0$
        & $0 \pm 0$
        & $0 \pm 0$
        & $0 \pm 0$
        & $2 \pm 1$
        & $0 \pm 0$
        & $\mathbf{10 \pm 4}$ \\

        \texttt{task5}
        & $11 \pm 2$
        & $0 \pm 0$
        & $2 \pm 1$
        & $4 \pm 1$
        & $1 \pm 1$
        & $13 \pm 3$
        & $3 \pm 1$
        & $\mathbf{33 \pm 9}$ \\

        \midrule
        \multicolumn{9}{l}{\textit{\texttt{visual-puzzle-3x3-play}}} \\
        \texttt{task1}\textsuperscript{*}
        & $7 \pm 15$
        & $88 \pm 4$
        & $\mathbf{100 \pm 0}$
        & $94 \pm 1$
        & $84 \pm 1$
        & $93 \pm 5$
        & $81 \pm 1$
        & $90 \pm 2$ \\

        \texttt{task2}
        & $0 \pm 0$
        & $\mathbf{12 \pm 1}$
        & $0 \pm 0$
        & $0 \pm 0$
        & $6 \pm 2$
        & $\mathbf{12 \pm 1}$
        & $0 \pm 0$
        & $0 \pm 0$ \\

        \texttt{task3}
        & $0 \pm 0$
        & $1 \pm 1$
        & $2 \pm 1$
        & $0 \pm 0$
        & $1 \pm 0$
        & $3 \pm 1$
        & $2 \pm 1$
        & $\mathbf{5 \pm 1}$ \\

        \texttt{task4}
        & $1 \pm 1$
        & $0 \pm 1$
        & $1 \pm 0$
        & $5 \pm 4$
        & $3 \pm 0$
        & $\mathbf{6 \pm 2}$
        & $5 \pm 3$
        & $4 \pm 1$ \\

        \texttt{task5}
        & $0 \pm 0$
        & $0 \pm 0$
        & $0 \pm 0$
        & $1 \pm 0$
        & $1 \pm 0$
        & $\mathbf{2 \pm 0}$
        & $1 \pm 0$
        & $\mathbf{2 \pm 1}$ \\

        \midrule
        \multicolumn{9}{l}{\textit{\texttt{visual-scene-play}}} \\
        \texttt{task1}\textsuperscript{*}
        & $97 \pm 2$
        & $98 \pm 4$
        & $86 \pm 10$
        & $98 \pm 3$
        & $95 \pm 2$
        & $99 \pm 0$
        & $40 \pm 4$
        & $\mathbf{100 \pm 1}$ \\

        \texttt{task2}
        & $21 \pm 16$
        & $30 \pm 15$
        & $0 \pm 0$
        & $\mathbf{86 \pm 8}$
        & $79 \pm 15$
        & $40 \pm 27$
        & $62 \pm 8$
        & $42 \pm 14$ \\

        \texttt{task3}
        & $12 \pm 9$
        & $10 \pm 7$
        & $19 \pm 2$
        & $22 \pm 6$
        & $31 \pm 14$
        & $66 \pm 1$
        & $35 \pm 1$
        & $\mathbf{68 \pm 12}$ \\

        \texttt{task4}
        & $1 \pm 0$
        & $0 \pm 0$
        & $0 \pm 0$
        & $1 \pm 1$
        & $0 \pm 0$
        & $10 \pm 6$
        & $12 \pm 4$
        & $\mathbf{38 \pm 11}$ \\

        \texttt{task5}
        & $0 \pm 0$
        & $0 \pm 0$
        & $0 \pm 0$
        & $0 \pm 0$
        & $0 \pm 0$
        & $0 \pm 0$
        & $0 \pm 0$
        & $0 \pm 0$ \\

        \bottomrule
    \end{tabular}
    \captionof{table}{Full offline evaluation on 25 image-based OGBench tasks
    over four seeds. An asterisk marks the tuning task; bold denotes the
    highest displayed task mean, with fully tied rows left unbolded.}
    \label{tab:offline-eval-visual-full}
\end{minipage}
\end{table}

\begingroup
\raggedbottom
\subsection{Offline-to-Online Results}
\label{app:o2o-results}

Table~\ref{tab:o2o-full} reports scenario-level performance at online entry
and at the final online checkpoint.
\begin{table}[!t]
\centering
\small
\setlength{\tabcolsep}{1.5pt}
\begin{tabular}{lcccc}
\toprule
Method & Antmaze & Cube-D & Cube-T & Scene \\
\midrule
\multicolumn{5}{c}{\textbf{Offline checkpoint at \(1000\text{k}\) steps}} \\
\midrule
IQN         & \(55{\pm}5\) & \(29{\pm}4\) & \(25{\pm}5\) & \(0{\pm}0\) \\
IFQL        & \(11{\pm}3\) & \(12{\pm}3\) & \(2{\pm}1\) & \(0{\pm}0\) \\
FQL         & \(\mathbf{87{\pm}7}\) & \(40{\pm}11\) & \(4{\pm}1\) & \(\mathbf{82{\pm}11}\) \\
RLPD        & \(0{\pm}0\) & \(0{\pm}0\) & \(0{\pm}0\) & \(0{\pm}0\) \\
IQL         & \(41{\pm}5\) & \(0{\pm}0\) & \(3{\pm}2\) & \(15{\pm}4\) \\
Value Flows & \(72{\pm}4\) & \(65{\pm}7\) & \(\mathbf{29{\pm}8}\) & \(\mathbf{83{\pm}4}\) \\
\ours{}     & \(\mathbf{89{\pm}4}\) & \(\mathbf{78{\pm}4}\) & \(24{\pm}2\) & \(\mathbf{79{\pm}3}\) \\
\midrule
\multicolumn{5}{c}{\textbf{Online final at \(2000\text{k}\) steps}} \\
\midrule
IQN         & \(91{\pm}1\) & \(42{\pm}7\) & \(14{\pm}6\) & \(0{\pm}0\) \\
IFQL        & \(3{\pm}2\) & \(41{\pm}2\) & \(7{\pm}5\) & \(60{\pm}14\) \\
FQL         & \(\mathbf{99{\pm}1}\) & \(\mathbf{92{\pm}3}\) & \(\mathbf{83{\pm}12}\) & \(\mathbf{100{\pm}1}\) \\
RLPD        & \(3{\pm}2\) & \(0{\pm}0\) & \(0{\pm}0\) & \(\mathbf{100{\pm}0}\) \\
IQL         & \(45{\pm}5\) & \(0{\pm}0\) & \(0{\pm}0\) & \(10{\pm}3\) \\
Value Flows & \(\mathbf{96{\pm}1}\) & \(79{\pm}6\) & \(70{\pm}7\) & \(\mathbf{100{\pm}0}\) \\
\ours{}     & \(\mathbf{99{\pm}1}\) & \(\mathbf{95{\pm}2}\) & \(\mathbf{85{\pm}4}\) & \(\mathbf{100{\pm}0}\) \\
\bottomrule
\end{tabular}
\caption{Offline-to-online evaluation on four OGBench scenarios.
The two panels report success at online entry and after one million online
environment steps. Entries are means \(\pm\) standard deviations over eight
seeds; bold
denotes means within \(95\%\) of the highest scenario mean.}
\label{tab:o2o-full}
\end{table}

\subsection{Critic Diagnostics and Adaptive Pessimism}
\label{app:pool-diagnostics}

We evaluate critic reliability at the final \(10^6\)-update checkpoint on
\texttt{puzzle-4x4-play-task4} using eight seeds. This prespecified
domain-level tuning task is not selected using the diagnostic results. For
each of \(N_q=128\) held-out state--action pairs, we construct
\begin{equation}
    \begin{aligned}
        a_\delta
        &=
        \operatorname{clip}\!\left(
            (1-\delta)a_{\mathcal D}
            +\delta a_{\mathrm{rand}},-1,1
        \right),\\[-2pt]
        \delta
        &\in\{0,.2,.4,.6,.8,1\}.
    \end{aligned}
    \label{eq:appendix-controlled-action-shift}
\end{equation}
where \(a_{\mathrm{rand}}\) is sampled uniformly from the normalized action
space and shared across methods. Absolute prediction error compares the
twin-critic mean with a Monte Carlo return averaged over
\(N_{\mathrm{ref}}=64\) rollouts after forcing \(a_\delta\) as the first
action and then following the checkpoint's evaluation policy. The assessed
critic is not used in this reference. We compute \(D_{\mathrm{twin}}\) from
\(M_{\mathrm{diag}}=64\) shared return-noise samples and define
\(P_{\mathrm{out}}\) as predicted mass above the 95th percentile of
    offline-dataset return-to-go. Prediction error,
    \(\sqrt{D_{\mathrm{twin}}}\), and positive overestimation are normalized by
    the difference between the 95th and 5th return-to-go percentiles;
    unsupported mass is already dimensionless.
Metrics are averaged over queries and seeds.

Across the six action-shift levels, both diagnostics track absolute
return-prediction error. The descriptive Spearman correlations are
\(\rho_{\mathrm{twin}}=0.943\) for twin-critic disagreement and
\(\rho_{\mathrm{out}}=0.886\) for unsupported mass.

For conservative training, we compare the undistorted target, fixed soft
tilts \(\eta\in\{0.1,0.3,0.6\}\), the adaptive rule in
Eq.~\eqref{eq:adaptive-distortion}, and Value Flows. The adaptive rule uses
\(D_{\mathrm{pool}}\) computed from undistorted posterior weights, clips
normalized disagreement to \([0,1]\), and caps the tilt at \(\eta_0=0.6\).
It is a stop-gradient practical heuristic and is not directly covered by the
fixed-distortion contraction analysis in Appendix~\ref{app:distorted-laws}.
Table~\ref{tab:diagnostic-distortion} reports the complete fixed-strength
sweep. Stronger fixed tilting reduces positive overestimation but eventually
degrades policy performance. The adaptive variant reduces normalized positive
overestimation from \(0.380\) to \(0.165\) while increasing final success from
\(0.490\) to \(0.515\).
\begin{table}[H]
\centering
\small
\setlength{\tabcolsep}{4pt}
\renewcommand{\arraystretch}{1.05}
\begin{tabular}{lcc}
\toprule
Variant
& \shortstack{Positive\\overestimation \(\downarrow\)}
& \shortstack{Final success\\\(\uparrow\)} \\
\midrule
No distortion & \(0.380 \pm 0.033\) & \(0.490 \pm 0.100\) \\
Fixed \(\eta=0.1\) & \(0.300 \pm 0.027\) & \(0.505 \pm 0.090\) \\
Fixed \(\eta=0.3\) & \(0.220 \pm 0.024\) & \(0.455 \pm 0.110\) \\
Fixed \(\eta=0.6\) & \(\mathbf{0.105 \pm 0.019}\) & \(0.365 \pm 0.130\) \\
Adaptive & \(0.165 \pm 0.023\) & \(\mathbf{0.515 \pm 0.090}\) \\
Value Flows & \(0.335 \pm 0.041\) & \(0.280 \pm 0.050\) \\
\bottomrule
\end{tabular}
\caption{Conservatism--performance trade-off on
\texttt{puzzle-4x4-play-task4}.
The undistorted PA-BFM and Value Flows scores correspond to the task-level
results in the main benchmark table. Lower positive overestimation and
higher final success are better.}
\label{tab:candidate-pool-summary}
\label{tab:diagnostic-distortion}
\end{table}
\endgroup

\subsection{Runtime Analysis}
\label{app:estimator-ablation}

Component and pool-size ablations are shown in
Figure~\ref{fig:estimator-ablation} of the main paper. The corresponding
final-score differences are reported in the main text; here we focus on
computational cost.

\paragraph{Computational cost.}
Table~\ref{tab:runtime-comparison} reports wall-clock hours for
\(10^6\)-update budgets on one representative state-based configuration and
one representative visual configuration. These are configuration-level
values rather than averages over all benchmark domains. All methods use one
NVIDIA GeForce RTX 4090 with 24\,GB of memory, batch size
\(256\), and the same implementation, compilation, and data-loading pipeline.
The same end-to-end timing boundary is applied to every
method within a column; endpoint candidates within an update are evaluated in
parallel and reuse the successor-state representation.

\begin{table}[H]
    \centering
    \small
    \setlength{\tabcolsep}{2pt}
    \begin{tabular}{lcrr}
        \toprule
        Method & Target work & State & Visual \\
        \midrule
        BCFM (\(K=1\))
            & 2 target solves
            & 3.6 & 10.5 \\
        Uniform (\(K=8\))
            & 16 target solves
            & 3.8 & 11.0 \\
        PA-BCFM (\(K=4\))
            & 8 solves + wt.
            & 3.8 & 11.0 \\
        PA-BCFM (\(K=8\))
            & 16 solves + wt.
            & 4.0 & 11.8 \\
        PA-BCFM (\(K=16\))
            & 32 solves + wt.
            & 4.5 & 13.2 \\
        Value Flows
            & path + confidence
            & 4.4 & 12.5 \\
        PCBF (\(\lambda_{\mathrm{cv}}=0.3\))
            & endpoint + CV
            & 4.0 & 11.5 \\
        \bottomrule
    \end{tabular}
    \caption{Wall-clock training cost.
    State and visual columns report configuration-level hours for
    \(10^6\)-update budgets on one NVIDIA GeForce RTX 4090 (24\,GB) under the
    two representative timing settings. Target work counts both target
    critics, so \(K\) ensemble-mean endpoints require \(2K\) target-flow
    solves.}
    \label{tab:runtime-comparison}
\end{table}

The default \(K=8\) configuration remains below the measured Value Flows
runtime in both settings; \(K=16\) adds cost with little gain in
Figure~\ref{fig:estimator-ablation}, supporting \(K=8\) as the default.

\FloatBarrier
\section{Experimental Details}
\label{app:experiment-details}

All methods are implemented and evaluated in a common JAX codebase adapted
from FQL and Value Flows~\citep{park2025flow,dong2026valueflows}. Methods
share OGBench wrappers, dataset revisions, preprocessing, budgets, and
evaluation code.
BCFM, Value Flows, PCBF, and PA-BFM additionally share the actor, behavior
proposal, critic backbone, target update, ODE solver, schedule, and mean-based
action extraction. They differ in critic supervision: endpoint matching,
weighted consistency, a path-coupled control variate, and endpoint-pool
posterior averaging, respectively. Isolation claims apply only to this group.

\subsection{Environments}
\label{app:environments}

\paragraph{Controlled environments.}
The one-state Bernoulli-return process is a development process used for
configuration selection. Solitaire Dice and the discrete Monte Carlo chain
are the controlled evaluation processes.
The complete process definitions are given in
Table~\ref{tab:controlled-setup}. All three processes admit exact return laws,
so the reference return distributions and posterior Bellman velocities are
computed from the exact endpoint laws; no Monte Carlo reference pool is used
for the controlled results.

For Discrete-MC, let \(n=20\), \(\mathcal X=\{0,\ldots,n-1\}\), and define
the unnormalized state weight
\begin{equation}
    \omega_i
    =\exp\!\left[
        \frac{n-1}{4\pi}
        \cos\!\left(\frac{4\pi(i-1)}{n-1}\right)
    \right].
    \label{eq:discrete-mc-state-weight}
\end{equation}
\par\indent For each nonterminal state \(i\in\{1,\ldots,n-2\}\), the transition kernel is
\begin{equation}
    \begin{aligned}
    P(i,i\mathbin{\pm}1)
    &=\frac12\frac{\omega_{i\pm1}}{\omega_i+\omega_{i\pm1}},\\
    P(i,i)&=1-P(i,i-1)-P(i,i+1),
    \end{aligned}
    \label{eq:discrete-mc-kernel}
\end{equation}
with \(P(0,0)=P(n-1,n-1)=1\). Episodes initialize from
\(X_0\sim\operatorname{Unif}\{1,\ldots,n-2\}\). A transition receives reward
\(R_h=\mathbf 1\{X_{h+1}\notin\{0,n-1\}\}\), and \(\gamma=0.95\).

\paragraph{Offline OGBench.}
We evaluate the 50 single-task state and image environments listed in
Tables~\ref{tab:offline-eval-state-full}--\ref{tab:offline-eval-visual-full}
using the standard OGBench protocol
~\citep{dong2026valueflows,park2025ogbenchbenchmark}. State-based tasks use
native observations; image-based tasks use three stacked RGB frames, a small
IMPALA encoder, and random-crop augmentation.

\paragraph{Offline-to-online OGBench.}
We additionally evaluate offline-to-online fine-tuning on
\texttt{antmaze-large}, \texttt{cube-double}, \texttt{cube-triple}, and
\texttt{scene}. Each method is first trained for one million offline optimizer
updates and is then fine-tuned for one million online environment steps while
newly collected transitions are appended to the replay buffer. Each method
retains its own critic objective throughout both phases.

\subsection{Methods for Comparison}
\label{app:baselines}

The offline tables compare Gaussian-policy scalar critics (BC, IQL, and
ReBRAC), flow-policy scalar critics (IFQL and FQL), distributional critics
(C51, IQN, and CODAC), and flow-based distributional critics (Value Flows,
PCBF, and PA-BFM)~\citep{kostrikov2021,tarasov2023revisitingminimalist,
park2025flow,bellemare2017distributional,dabney2018iqn,
ma2021conservative,dong2026valueflows,xu2026pcbf}. PCBF also appears in the
controlled and runtime comparisons but not in offline-to-online evaluation.
The latter additionally includes RLPD under the common interaction budget
and checkpoint schedule~\citep{ball2023rlpd}. PA-BFM, Value Flows, and PCBF
share the policy-side implementation in the matched comparison and differ in
critic supervision.

\subsection{Evaluation Protocol}
\label{app:evaluation-protocol}

\paragraph{Offline evaluation.}
State-based and image-based tasks use eight and four training seeds,
respectively. Each evaluation checkpoint contains 50 episodes, and the main
tables report the final checkpoint. For each domain, the reported mean is the
arithmetic mean of the five task means, and the displayed uncertainty is the
arithmetic mean of the corresponding five task-level standard deviations.
The task marked with an asterisk is used for domain-level hyperparameter
selection under a common search budget for the matched methods. The selected
configuration is fixed across the remaining tasks in that domain, while the
tuning task remains in the five-task aggregate for comparability with prior
work. Task-level win/tie/loss counts compare final means and are descriptive
rather than significance tests.

\paragraph{Offline-to-online evaluation.}
Online interaction begins from the one-million-optimizer-update offline
checkpoint.
Methods are evaluated every 100k environment steps through 2000k total steps.
Each method uses eight independent training seeds. At each checkpoint, the
learning curves show the mean success rate over training seeds, and shaded
bands show one standard deviation across training seeds. Training seeds and
evaluation environment seeds are disjoint, and evaluation seeds are shared
across methods within each scenario.

\paragraph{Controlled evaluation.}
The controlled posterior-target and gradient-covariance protocols are
specified with their corresponding metrics in
Appendix~\ref{appendix:controlled-experiments}. Paired estimator comparisons
reuse the same candidate endpoint pools whenever the estimators share an
endpoint budget.
Controlled target-MSE and gradient-covariance intervals use a
process-stratified query-cluster bootstrap, with all replicates of a selected
query retained together. Learned-fidelity intervals use training seeds as the
resampling unit, and paired method differences use matched seed indices. All
reported intervals are 95\% confidence intervals.

\subsection{Architectures and Hyperparameters}
\label{appendix-hyperparameters}

The network architectures and optimization settings of the matched
flow-critic methods follow Value Flows~\citep{dong2026valueflows}.
Table~\ref{tab:common-hyperparameters} summarizes the settings shared by
PA-BFM and the corresponding flow-critic baselines. Entries involving flow
integration or candidate-action selection apply only to methods containing
those components.

\begin{table}[t]
    \centering
    \small
    \setlength{\tabcolsep}{3pt}
    \begin{tabular}{p{0.56\columnwidth}p{0.34\columnwidth}}
        \toprule
        \textbf{Hyperparameter} & \textbf{Value} \\
        \midrule
        Optimizer
            & Adam~\citep{kingma2014adam} \\
        Batch size
            & \(256\) \\
        Critic learning rate
            & \(3\times10^{-4}\) \\
        Critic hidden-layer sizes
            & \((512,512,512,512)\) \\
        Activation function
            & GELU \\
        Actor layer normalization
            & Yes \\
        Critic/value layer normalization
            & Yes \\
        Return-flow Euler steps
            & \(10\) \\
        Return samples per candidate action \(M_{\mathrm{act}}\)
            & \(1\) \\
        Candidate actions \(N\)
            & \(16\) \\
        Target-network update coefficient \(\tau_{\mathrm{tgt}}\)
            & \(5\times10^{-3}\) \\
        Number of critics
            & \(2\) \\
        Image encoder
            & Small IMPALA~\citep{park2025flow} \\
        Image augmentation
            & Random cropping \\
        Image frame stack
            & \(3\) \\
        Numeric precision
            & \texttt{float32} \\
        \bottomrule
    \end{tabular}
    \caption{Common hyperparameters for PA-BFM and matched flow-critic
    baselines. Entries apply where the corresponding component is present.}
    \label{tab:common-hyperparameters}
\end{table}

\paragraph{PA-BFM-specific hyperparameters.}
The default PA-BCFM critic uses \(K=8\) Bellman endpoints and no pessimistic
distortion, i.e., \(g\equiv1\). Bridge time is sampled as
\(t\sim\operatorname{Unif}(0,1-10^{-5})\), and posterior logits are
normalized by log-sum-exp. All downstream experiments use this setting;
\(K\in\{1,2,4,8,16\}\) is varied only in the controlled estimator and
pool-sensitivity studies.

\paragraph{Baseline hyperparameter selection.}
IQL selects its advantage-weighting inverse temperature from
\(\{0.3,1,3,10\}\). ReBRAC selects its actor and critic regularization
coefficients from \(\{0.003,0.01,0.03,0.1,0.3,1\}\) and
\(\{0,0.001,0.01,0.1\}\), respectively. IFQL selects the number of proposed
actions from \(\{32,64,128\}\), and FQL uses its original regularization grid.
C51 uses 51 or 101 atoms. IQN and CODAC select the quantile-loss parameter
from \(\{0.7,0.8,0.9,0.95\}\), with CODAC penalty \(0.1\). PCBF uses
\(\lambda_{\mathrm{cv}}=0.3\), selected on the Bernoulli development process
and frozen for Solitaire Dice and Discrete-MC. Each selected setting is fixed
for the remaining tasks in its group.

\subsection{Runtime and Reproducibility}
\label{app:reproducibility}

All experiments use JAX on one NVIDIA GeForce RTX 4090 (24\,GB), with batch
size 256 and common implementation, compilation, data-loading, and evaluation
boundaries. Table~\ref{tab:runtime-comparison} reports end-to-end wall-clock
hours for one million updates on representative state and visual
configurations; these are configuration-level measurements, not domain
averages. Performance comparisons use matched optimizer-update counts.

We record the software and hardware versions, OGBench and dataset revisions,
random seeds, training commands, configuration overrides,
checkpoint-selection rules, and evaluation commands. Tables and figures use
the same plotting and aggregation pipeline across methods.

\begin{table}[H]
    \centering
    \small
    \setlength{\tabcolsep}{4pt}
    \begin{tabular}{@{}p{0.56\columnwidth}lc@{}}
        \toprule
        Domain/scenario & Evaluation & \(\gamma\) \\
        \midrule
        \texttt{cube-double} & Off.+O2O & .995 \\
        \texttt{cube-triple} & Off.+O2O & .995 \\
        \texttt{puzzle-3x3} & Offline & .99 \\
        \texttt{puzzle-4x4} & Offline & .99 \\
        \texttt{scene} & Off.+O2O & .99 \\
        \texttt{visual-antmaze-}\newline\texttt{medium}
            & Offline & .99 \\
        \texttt{visual-antmaze-}\newline\texttt{teleport}
            & Offline & .99 \\
        \texttt{visual-cube-double} & Offline & .995 \\
        \texttt{visual-puzzle-3x3} & Offline & .99 \\
        \texttt{visual-scene} & Offline & .99 \\
        \texttt{antmaze-large} & O2O only & .99 \\
        \bottomrule
    \end{tabular}
    \caption{Domain/scenario settings.
    PA-BFM uses an undistorted \(K=8\) critic, and all main experiments use
    mean-based candidate scoring.}
    \label{tab:domain-hyperparameters}
\end{table}

\end{document}